\def\eqref#1{equation~\ref{#1}}
\def\1{\bm{1}}
\DeclareMathAlphabet{\mathsfit}{\encodingdefault}{\sfdefault}{m}{sl}
\SetMathAlphabet{\mathsfit}{bold}{\encodingdefault}{\sfdefault}{bx}{n}
\newtheorem{theorem}{Theorem}
\newtheorem{lemma}{Lemma}
\newtheorem{definition}{Definition}
\newtheorem{assumption}{Assumption}
\newcommand{\closer}[3]{{\kern-#1ex{#2}\kern-#3ex}}
\DeclarePairedDelimiterX{\infdivx}[2]{(}{)}{%
  #1\;\delimsize\|\;#2%
}
\newcounter{commentCounter}
\newif\iftrvar
\newcommand{\am}[1]{{\small \color{blue} \refstepcounter{commentCounter}\textsf{[AM]$_{\arabic{commentCounter}}$:{#1}}}}
\newcommand{\am}[1]{}
\definecolor{lightgray}{rgb}{.9,.9,.9}
\definecolor{darkgray}{rgb}{.4,.4,.4}
\definecolor{purple}{rgb}{0.65, 0.12, 0.82}
\definecolor{darkgreen}{rgb}{0, 0.365, 0}
\definecolor{orange}{rgb}{1,0.5,0}
\definecolor{deep}{rgb}{0.13, 0.46, 0.7}
\title{Generalization Across Observation Shifts in Reinforcement Learning}
\author{Anuj Mahajan  \thanks{Corresponding author. Part of work done as PhD student at University of Oxford.} \\
Amazon\\
\texttt{anuj.mahajan.ai@gmail.com} \\
\And %
Amy Zhang \\
University of Texas at Austin \\
\texttt{amy.zhang@austin.utexas.edu} \\
}
\begin{document}

\maketitle

\begin{abstract}
Learning policies which are robust to changes in the environment are critical for real world deployment of Reinforcement Learning agents. They are also necessary for achieving good generalization across environment shifts.
We focus on bisimulation metrics, which provide a powerful means for abstracting task relevant components of the observation and learning a succinct representation space for training the agent using reinforcement learning. In this work, we extend the bisimulation framework to also account for context dependent observation shifts. Specifically, we focus on the simulator based learning setting and use alternate observations to learn a representation space which is invariant to observation shifts using a novel bisimulation based objective. This allows us to deploy the agent to varying observation settings during test time and generalize to unseen scenarios. We further provide novel theoretical bounds for simulator fidelity and performance transfer guarantees for using a learnt policy to unseen shifts. Empirical analysis on the high-dimensional image based control domains demonstrates the efficacy of our method.
\end{abstract}

\vspace{-3mm}
\section{Introduction}
\label{ch8_intro}
\vspace{-2mm}
Many practical scenarios in reinforcement learning (RL) applications require the agent to be robust to changes in the observations space between training and deployment. Such changes can occur due to lack of complete information about the deployment environment which often happens as the training environment is usually highly controlled or simulators are used for training the agent, both of these scenarios seldom capture the complexity and noisiness of the real world. Moreover, these changes can also occur due to various practical errors and constraints under which autonomous agents need to be deployed (e.g. variations in sensor position and fitting on automobiles, change in calibration settings of visual input, change in sensor types due to upgrades, calibration changes due to wear and tear etc.).
\vspace{-2mm}

While existing methods aimed at obtaining better generalization in RL can be partially applied to the above problem, they hardly utilise the rich underlying structure that can enable efficient learning of policies which generalize well across the observation shifts. For instance, methods like domain randomization \citep{zhao2020sim} which work well for supervised perception problems in robotics are insufficient for obtaining good performance on control tasks. Similarly, methods for RL which aim at using unsupervised data for learning control representations: data augmentation \citep{laskin2020reinforcement, kostrikov2020image}, contrastive learning \citep{oord2018representation}, reconstruction \citep{Lange2012Autonomous, hafner2018planet} are not well aligned with the objective of maximizing rewards in complex domains. Further, the presence of task irrelevant noise in the environment make it difficult for these methods to generalise across the changes in the observation space. While, state abstraction based methods like bisimulation \citep{zhangLearningInvariantRepresentations2021, gelada2019deepmdp} to which our method is closely related, can help ignore irrelevant task features, they do not fully exploit the structure present in observation shift setting towards ensuring better generalization.

\vspace{-2mm}
In this work, we propose a novel solution to the aforementioned problem using the concept of conditional bisimulation and application of simulator/specialized setup during train time which help explicitly teach the agent, the similarities across changes in the observation space. Our method leverages the MDP level isomorphism \citep{ravindran2004algebraic} in the observation shift setting for obtaining a richer representation loss. Our methods offers two-fold advantage: 
\begin{itemize}
    \item We can learn representations which are robust to shifts in observation space in a sample efficient manner.
    \item We learn to ignore task irrelavant features as our metric is grounded in rewards.
\end{itemize}
\vspace{-2mm}
\section{Background}
\label{background}\vspace{-2mm}
\textbf{Reinforcement Learning: }
A Markov Decision Process (MDP) is formally defined as a tuple $ \left\langle S,U,P,r,\gamma,\rho\right\rangle$. 
Here $S$ is the state space of the environment and $\rho$ is the initial state distribution. At each time step $t$, an agent observes the state $s \in S$ and chooses an action $a \in U$ using its policy $\pi: S \rightarrow \mathcal{P}(U)$, where $\mathcal{P}(\cdot)$ represents the space of distributions on the argument set.
This leads to a state transition governed by the distribution $P(s'|s,a):S\times{U}\times S\rightarrow [0,1]$, and the agent receives reward $r(s,{a}):S \times {U} \rightarrow [0,R_{max}]$ which can be potentially stochastic. We consider the discounted infinite horizon setting, where the discount factor is given by $\gamma \in [0,1)$. 
The state-action trajectory of the agent is represented by $\tau\in T\equiv(S\times U)^*$, we overload the notation to also include rewards as necessary. 
The value of a policy is defined as: $J^\pi =  \mathbb{E}_{\pi,\rho} \left[  \sum_{t=0}^\infty \gamma^t r_{\tau}(s_t) \right]$
where the expectation on the RHS is well defined given bounds on rewards and $\gamma$. We also define three other useful functions: \smash{(1) $Q^{\pi}(s, a) = \mathbb{E}_{\pi}[\sum_{t=0}^{\infty}\gamma^t r(s_t) \vert s_0 = s, a_0=a]$}, (2) $V^\pi(s) =\mathbb{E}_{a\sim\pi}Q^\pi(s,a)$, (3) $A^\pi(s, a) = Q^\pi(s,a) - V^\pi(s)$,
respectively called the action-value, value and advantage functions. The goal of the MDP problem is to find the optimal policy $\pi^{*}$ corresponding to the optimal policy value $J^*$. It is well known that a deterministic optimal policy always exists for finite MDPs. Further, the optimal value function $V^*$ and optimal action value function $Q^*$ also exhibit important properties like uniqueness and point-wise function dominance over the entire domain \citep{sutton2011reinforcement}. A standard assumption in the reinforcement learning (RL) setting is that both the rewards and transition kernels are not known to the agent. \vspace{1mm}
\label{ch2cmdp}
\begin{wrapfigure}{r}{0.35\linewidth}
\centering
\vspace{-10pt}
\includegraphics[width=0.9\linewidth]{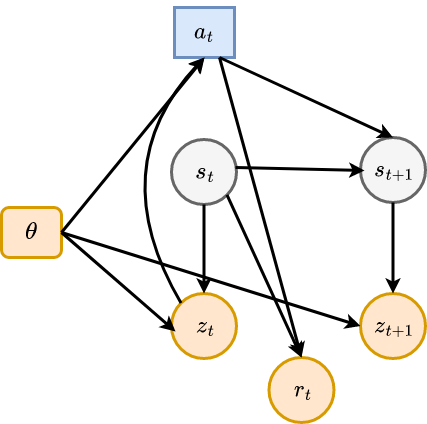}
\caption{Parametrized observation setting.
\label{fig:pgm_setting}}
\vspace{-15pt}
\end{wrapfigure}
\textbf{Rich observations and Contextual MDPs: \label{rocmdp}} An important class of MDP arises when we consider the presence of an underlying parametrized context $\theta$, which governs the rewards and transitions in the MDP framework. This extension is called the Contextual MDP setting (CMDP, \cite{hallak2015contextual}). Formally, we have $\mathcal{M}\triangleq \left\langle S,  U,  P_{\theta},  r_{\theta}, \gamma, \rho, \Theta,  P_\Theta\right\rangle$, where $\Theta$ defines a space of context parameters, $ P_\Theta$ is a fixed distribution over the contexts. Thus the transitions $P_\theta:S\times{U}\times S\times\Theta\rightarrow [0,1]$, and the agent reward $r_\theta:S \times {U}\times\Theta \rightarrow [0,R_{max}]$ are now also functions of the context parameter $\theta$. 
We now discuss the CMDP setting used in this work: we consider a parametrized context which defines a functional transformation of the underlying MDP state giving rise to context dependent observations. Formally, we have $\mathcal{M}\triangleq \left\langle S,  U,  P,  r, \gamma, \rho, \Theta,  P_\Theta,  Z, f\right\rangle$, where $\Theta$ defines a space of context parameters, $ P_\Theta$ is a fixed distribution over the contexts, $ Z$ is the set of observations emitted as $f:  S \times \Theta \to  Z$. Thus fixing a particular context $\theta$ gives us a richly observed MDP \citep{krishnamurthy2016pac, mahajan2023a} indexed by $\theta$: $\mathcal{M}_\theta$. We assume that the agent observes $\theta$ in our setting. \cref{fig:pgm_setting} illustrates the parametrized observation setting. Without loss of generality, we assume $ S\subset [0,1]^n$, $ Z\subset [0,1]^l$, where typically $n<<l$. We will use $f(s, \theta), f_\theta(s)$ interchangeably to highlight the corresponding (un)-curried versions of the observation function. We will be focusing on functional forms for observations, but the setting can be extended to scenarios with added independent or correlated noise at each step with suitable assumptions about identifiability~\citep{zhangInvariantCausalPrediction2020}.

\vspace{-2mm}\textbf{Bisimulation: }
MDP Bisimulation defines a notion of state abstraction which groups states that are behaviorally equivalent~\citep{li2006stateabs}.
Two states $s_i$ and $s_j$are bisimilar if they both share the same immediate reward and equivalent distributions over the next bisimilar states for all possible actions ~\citep{larsen1989bisim,Givan2003EquivalenceNA}. Formally:
\vspace{-1mm}
\begin{definition}[Bisimulation Relations~\citep{Givan2003EquivalenceNA}]
Given an MDP $\mathcal{M}$, an equivalence relation $B$ between states is a bisimulation relation if, for all states $s_i,s_j\in S$ that are equivalent under $B$ (denoted $s_i\equiv_Bs_j$) the following conditions hold:
\begin{alignat}{2}
     r(s_i,a )&\;=\; r(s_j,a ) 
    &&\quad \forall a \in U, \label{eq:bisim-discrete-r} \\
     P(G|s_i,a )&\;=\; P(G|s_j,a ) 
    &&\quad \forall a \in U, \quad \forall G\in S_B, \label{eq:bisim-discrete-p}
\end{alignat}
where $ S_B$ is the partition of $ S$ under the relation $B$ (the set of all groups $G$ of states equivalent under $B$), and $ P(G|s,a )=\sum_{s'\in G} P(s'|s,a ).$ (See \cref{app:erc} for a primer on concepts related to equivalence relations)
\end{definition}

Finding the coarsest bisimulation relation is known to be an NP-hard problem \citep{Givan2003EquivalenceNA}. Further, the exact partitioning induced from a bisimulation relation is generally impractical as it a very strict notion of equivalence and seldom leads to meaningful compression of the original MDP, this is especially true in continuous domains, where infinitesimal changes in the reward function or dynamics can break the bisimulation relation but still imply exploitable aggregation. Thus towards addressing this, Bisimulation Metrics~\citep{ferns2011contbisim,ferns2014bisim_metrics,castro20bisimulation,vanbreugel2001QuantitativeVerificationProbabilistic} relaxes the concept of exact bisimulation, and instead define a pseudometric space $( S, d)$, where a distance function $d: S\times S\mapsto\mathbb{R}_{\geq 0}$ measures the behavioral similarity between two states.
The bisimulation metric is formally defined as a convex combination of the reward difference added to the Wasserstein distance between transition distributions:
\begin{definition}[Bisimulation Metric]
\label{def:bisim_metric}
From Theorem 2.6 in \citep{ferns2011contbisim} with $c\in[0,1)$:
\begin{align}
d(s_i,s_j) 
\;&=\; \max_{a \in  U}\;
(1-c)\cdot | r_{s_i}^a  -  r_{s_j}^a | + c\cdot W_1( P_{s_i}^a , P_{s_j}^a ;d).
\label{eq:bisim_metric}
\end{align}
\end{definition}
\vspace{-2mm}$W$ refers to the Wasserstein-$p$ metric between two probability distributions $ P_i$ and $ P_j$, defined as
$W_p( P_i, P_j;d)=\inf_{\gamma'\in\Gamma( P_i, P_j)}[\int_{ S\times  S} d(s_i,s_j)^p\, \textnormal{d}\gamma'(s_i,s_j)]^{1/p}$,
where $\Gamma( P_i, P_j)$ is the set of all couplings of $ P_i$ and $ P_j$. The metric has intuitive interpretations depending on the exact value of $p$ when viewed from the dual perspective, for example $W_1( P_i, P_j;d)$ denotes the cost of transporting mass from distribution $P_i$ to another distribution $P_j$ where the cost is given by the distance metric $d$~\citep{optimaltransport}.
This is known as the earth-mover distance. 
The above definition can also be modified to include scenarios involving stochastic rewards, where a similar metric is chosen between reward distributions.
To account for state similarities arising from following a particular policy, the $\pi$-bisimulation metric \citep{castro20bisimulation} is similarly defined by fixing a policy $\pi$ and replacing the rewards and transitions used by their policy based expectations:
\begin{align}
d^\pi(s_i,s_j) 
\;&=\;
(1-c)\cdot | r_{s_i}^\pi -  r_{s_j}^\pi| + c\cdot W_1( P_{s_i}^\pi, P_{s_j}^\pi;d^\pi).
\label{eq:pi_bisim_metric}
\end{align}
In this work we will consider the max entropy RL framework as it ensures a unique optimal policy $\pi^*_{merl}$\footnote[3]{We will refer to it as $\pi^*$ in this work for brevity.}. Our goal is to leverage generalization and transfer obtained from informing the agent representation with the bisimulation similarity metric (\cref{eq:pi_bisim_metric}) under $\pi^*$.

\section{Methodology}
\label{method}
\begin{wrapfigure}{r}{0.6\linewidth}
    \centering
    \vspace{-35pt}
    \includegraphics[width=\linewidth]{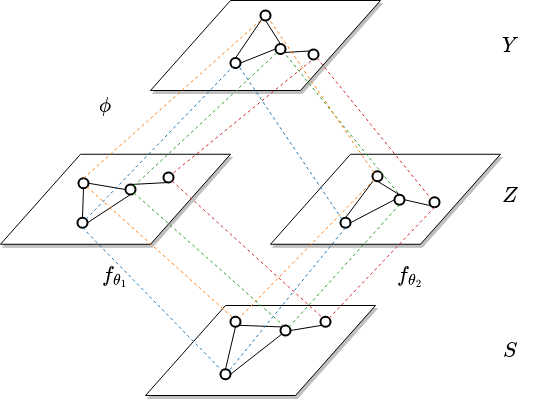}
    \caption{Learning representation invariant to observation shifts. Hollow circles represent states in the space, solid lines depict distances in the corresponding space, dashed lines depict equivalence across spaces tied by the colour.}
    \label{fig:representation}
    \vspace{-20pt}
\end{wrapfigure}
As previously discussed, it is important that agent policies in RL are robust to observation shifts for deployment in real world scenarios. In this work we wish to learn policies which can generalize well across the support set of the context distribution $P_\Theta$.
Our goal specifically would be to learn an effective representation function for the RL task set, $\phi: Z\times\Theta\mapsto{Y}$ which enables robust learning and deployment of autonomous agents to potentially unseen observation shifts (governed by a change in $\theta$), see ~\cref{fig:representation}. Under suitable notions of invertibility (\cref{as:block}),
the problem of generalizing across parameterized observation shifts (\cref{rocmdp}) lends itself naturally to the notion of MDP isomorphism (\cite{ravindran2004algebraic}, \smash{\cite{mahajan2017symmetry}}, see \cref{mdphomo} in \cref{app:proofs}). This is because given two contexts $\theta_i, \theta_j$, there is always a one to one mapping between the observations in $\mathcal{M}_{\theta_i} \iff \mathcal{M}_{\theta_j}$ as directed by the underlying state, this is illustrated in \cref{fig:representation}. This inter-context correspondence helps us inform the representation more efficiently.
Concretely, we specify the desiderata which the representation function $\phi$ must follow, as shown in \cref{fig:bisim_losses}:
\begin{itemize}
    \item \textbf{Base Bisimulation (BB)}: Given a $\theta \in \Theta$, the representation should accurately preserve bisimulation distances between states, thus providing robustness to unimportant noise in observations. Concretely $\forall s_i, s_j \in \mathcal{S}$:
    \begin{align}
        d(s_i, s_j) &= d_{Y}(\phi(f_\theta(s_i), \theta), \phi(f_\theta(s_j), \theta)),
    \end{align}
    where $d_{Y}$ is a metric on $Y$ (we use $Y=\mathbb{R}^m$ and L1 distance for our experiments).
    \item \textbf{Inter-context consistency (ICC)}: The representation should remain invariant under a fixed state as the context changes. Concretely: $\forall s \in \mathcal{S}$ and $\theta_1, \theta_2 \in \Theta$,
    \begin{align}
        d_{Y}(\phi(f_{\theta_1}(s), \theta_1), \phi(f_{\theta_2}(s), \theta_2))&=0.
    \end{align}
    \item \textbf{Cross consistency (CC)}: This requires that the representation distance between two states are consistent across observation shifts: 
    \begin{align}
        d(s_i, s_j)&=d_{Y}(\phi(f_{\theta_1}(s_i), \theta_1), \phi(f_{\theta_2}(s_j), \theta_2)),\\
        d(s_i, s_j)&=d_{Y}(\phi(f_{\theta_2}(s_i), \theta_2), \phi(f_{\theta_1}(s_j), \theta_1)).
    \end{align}
\end{itemize}
\begin{figure}[H]
    \centering
    \includegraphics[width=0.75\linewidth]{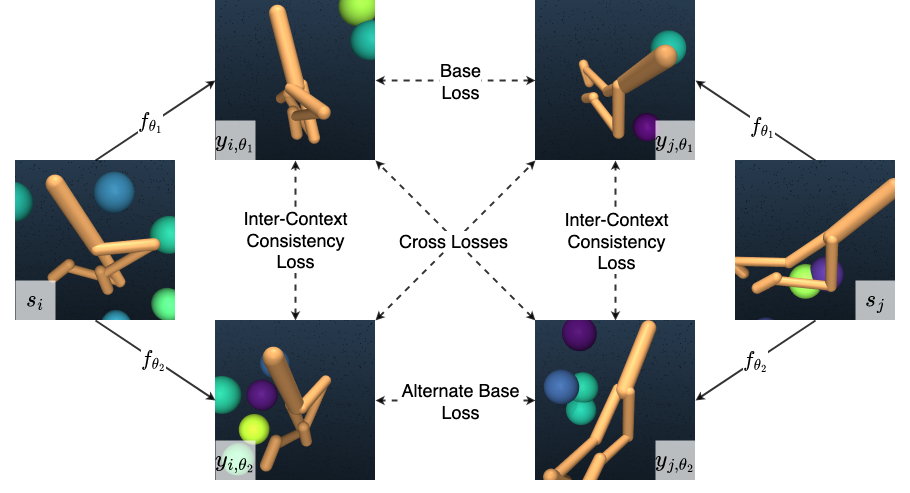}
    \caption{\small Various bisimulation losses corresponding to above desiderata. $s$ represents underlying state, $f_\theta$ the observation function and $y$ the corresponding observation, dashed lines represent bisimulation terms.}
    \label{fig:bisim_losses}
    \vspace{-10pt}
\end{figure}

\begin{wrapfigure}{R}{0.55\textwidth}
\vspace{-9mm}
\begin{minipage}{0.55\textwidth}
\begin{algorithm}[H]
    \begin{algorithmic}[1]
    \floatname{algorithm}{Procedure}
    \FOR {Time $t = 0$ to $\infty$}
    \STATE{Observe $z_{t}, \theta$}
    \STATE{Encode observation $y_t = \phi(z_{t}, \theta)$}
    \STATE{Execute action $a _t \sim \pi(y_t)$}
    \STATE{Record data: $\mathcal{D} \leftarrow \mathcal{D} \cup \{z_{t}, a _t, z_{t+1}, r_{t+1}\}$}
    \STATE {Sample batch $\mathcal{B}\sim\mathcal{D}$}
    \STATE{Train policy: $\mathbb{E}_{\mathcal{B}}[J^\pi]$}
    \STATE{Train encoder using pairwise loss: $\mathcal{L}_{rep}(\phi)$
    \algorithmiccomment{\cref{eq:bisim_loss}}}
    \STATE{\smash{Train dynamics: $J(\hat{P},\!\phi) \!=\! (\hat{P}(\phi(z_{t}, \theta),a_t) \!-\! {y}_{t+1})^2$\!\!\!\!}}
    \ENDFOR
    \end{algorithmic}
    \caption{Robust Conditional Bisimulation (RCB)}
    \label{alg:cbisim}
\end{algorithm}
\end{minipage}
\vspace{-3mm}
\end{wrapfigure}
\cref{fig:bisim_losses} depicts the above representation criteria for $2$ different contexts($\theta_1, \theta_2$) on the Mujoco control domain with 3D background objects acting as noise. Towards ensuring the above desiderata, we propose Robust Conditional Bisimulation (RCB) \cref{alg:cbisim}, a data-efficient approach to learn control policies from unstructured, high-dimensional observations. As evident from \cref{fig:bisim_losses}, for $n$ parallel simulation calls, our method captures \smash{${2n \choose 2}\sim O(n^2)$} interactions for representation learning using the above conditional bisimulation terms as opposed to $O(n)$ interactions in existing representation learning methods. For instance data augmentation methods based on contrastive learning (like \cite{oord2018representation, laskin_srinivas2020curl}) focus only on $(f_{\theta_1}(s), f_{\theta_2}(s))$ pairs whereas as plain bisimulation methods (like \cite{zhangLearningInvariantRepresentations2021}) focus only on $(f_{\theta}(s_1), f_{\theta}(s_2))$ pairs. This order of magnitude increase in utilization of metric information in RCB allows for fast and efficient convergence to an observation invariant representation space.

We combine the above three representation conditions into a sum of squared loss components. For this we sample pairs of experiences $i, j$ from the buffer along with base context $\theta_1$ and an alternate context $\theta_2$ both sampled from $P_\Theta$ at episode start. We next compute the embedding of the underlying states under the contexts and finally compute the representation loss term as follows:
\begin{align}
    \mathcal{L}_{rep}(\phi) =& \lambda_{base}\big[\big(|\overline{y}_{i,\theta_1} - \overline{y}_{j,\theta_1}|_1 - T_{i,j}\big)^2 + \big(|\overline{y}_{i,\theta_2} - \overline{y}_{j,\theta_2}|_1 - T_{i,j}\big)^2\big] + \\
    & \lambda_{icc}\big[ |\overline{y}_{i,\theta_1} - \overline{y}_{i,\theta_2}|_1^2 +
    |\overline{y}_{j,\theta_1} - \overline{y}_{j,\theta_2}|_1^2 \big]+\\
    & \lambda_{cc}\big[\big(|\overline{y}_{i,\theta_1} - \overline{y}_{j,\theta_2}|_1 - T_{i,j}\big)^2+
    \big(|\overline{y}_{i,\theta_2} - \overline{y}_{j,\theta_1}|_1 - T_{i,j}\big)^2\big],
    \label{eq:bisim_loss}
\end{align}
where we use the following notation: $y_{i,\theta} = \phi(f(s_i, \theta), \theta)$ with $\overline{y}_{i,\theta}$ representing embeddings with stopped gradient and the target bisimulation distance $T_{i,j} = |r_i-r_j|+\gamma W_2(\hat P(\cdot|y_{i,\theta_1}, a_i),\hat P(\cdot|y_{j,\theta_1}, a_j))$. The relative weights for the three loss terms are given by hyper-parameters $\lambda_{base}, \lambda_{icc}, \lambda_{cc}$ respectively. We use a setup similar to \cite{zhangLearningInvariantRepresentations2021} where we use a permuted batch of $\mathcal{B}$ for pairwise representation loss computation in step-$8$ of \cref{alg:cbisim}. Similarly we a probabilistic dynamics model $\hat{P}$ which outputs a Gaussian distribution. This allows for a simple to compute closed form $W_2$ metric which is used to replace the $W_1$ metric in the original formulation:
$W_2(\mathcal{N}(\mu_i,\Sigma_i),\,\mathcal{N}(\mu_j,\Sigma_j))^2 =||\mu_i-\mu_j||_2^2 + ||\Sigma_i^{1/2} \!-\! \Sigma_j^{1/2}||^2_\mathcal{F}$,
where $||\cdot||_\mathcal{F}$ is the Frobenius norm. \cref{fig:network_bisim} depicts the overall representation learning process. 
\begin{wrapfigure}{r}{0.6\linewidth}
    \centering
     \includegraphics[width=\linewidth]{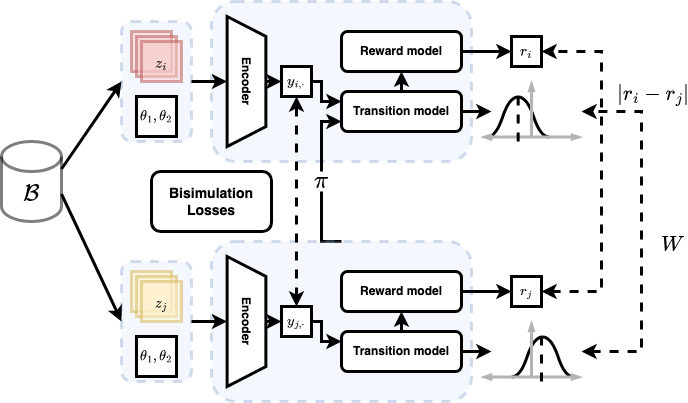}
    \caption{RCB Network architecture. }
    \label{fig:network_bisim}
    \vspace{-4mm}
\end{wrapfigure}
Finally, for the policy optimization part in step-$7$, we can use any max entropy policy gradient method. Access to simulator helps us translate a sampled batch from buffer into any randomly sampled contexts from which we can compute the various losses.
However, in general this technique can also be extended to non-simulator settings like data augmentation~\citep{laskin2020reinforcement}, this could be specially promising as the latter approaches currently only minimize representation distance between two views of same input and not the bisimulation distance which is more aligned with solving the RL task.

\section{Analysis}
We now discuss the important theoretical properties of our approach and study the generalization we can expect from learning representations under the conditional bisimuation framework. Proofs for the results can be found in \cref{app:condbisim}. The first result demonstrates the convergence of the $\pi^*$-bisimulation metric \cref{eq:pi_bisim_metric} on the joint input space $H\triangleq Z\times\Theta$ (We use the notation $h\triangleq(z, \theta)$ for a tuple in this space). We also overload the notion of policy($\pi$) to implicitly contain $\phi$ so that it can be viewed as operating on the joint space.
\begin{theorem}
\label{thm:bisim_fixed_point}
Let $\mathfrak{met}$ be the space of bounded pseudometrics on $Z\times\Theta$ and $\pi$ a policy that is continuously improving. Define $\mathcal{F}:\mathfrak{met} \mapsto \mathfrak{met}$ by
\begin{align}
\mathcal{F}(d,\pi)(h_i,h_j)=(1-c)|r^{\pi}_{h_i} - r^{\pi}_{h_j}| + c W(d)(P^{\pi}_{h_i},P^{\pi}_{h_j})
\end{align}
Then, $\forall c \in (0,1)$, $\mathcal{F}$ has a least fixed point $\tilde{d}$ which is a  $\pi^*$-bisimulation metric.
\end{theorem}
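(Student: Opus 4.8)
The plan is to mimic the classical argument of Ferns et al.\ and Castro for the convergence of the $\pi$-bisimulation metric, adapting it to the joint input space $H = Z\times\Theta$ and to the fact that $\pi$ is continuously improving (so we are really composing an operator on metrics with a policy-improvement step). First I would verify that $\mathfrak{met}$, the space of bounded pseudometrics on $H$, is a complete metric space under the sup metric $\rho(d_1,d_2)=\sup_{h_i,h_j}|d_1(h_i,h_j)-d_2(h_i,h_j)|$; this is standard (the uniform limit of pseudometrics is a pseudometric, and the triangle inequality and symmetry pass to the limit), so boundedness of rewards (by $R_{\max}$) guarantees the iterates stay in a bounded set. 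Then I would show that, for a \emph{fixed} policy $\pi$, the map $d\mapsto\mathcal{F}(d,\pi)$ is a $c$-contraction on $(\mathfrak{met},\rho)$: the reward term does not depend on $d$, and the Wasserstein term satisfies $|W(d_1)(P^\pi_{h_i},P^\pi_{h_j}) - W(d_2)(P^\pi_{h_i},P^\pi_{h_j})| \le \rho(d_1,d_2)$ by the standard coupling argument (use the optimal coupling for one metric as a feasible coupling for the other), so multiplying by $c$ gives the contraction factor. Banach's fixed point theorem then yields a unique fixed point $d^\pi$ for each fixed $\pi$, and this fixed point is exactly the $\pi$-bisimulation metric of \cref{eq:pi_bisim_metric} lifted to $H$.

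The remaining work is to handle the "continuously improving" policy and to identify the limit as a $\pi^*$-bisimulation metric. Here I would argue that since $\pi$ improves continuously and, in the max-entropy RL framework, converges to the unique optimal policy $\pi^*$ (as noted in the background), the reward and transition quantities $r^\pi_h, P^\pi_h$ converge to $r^{\pi^*}_h, P^{\pi^*}_h$; by continuity of $\mathcal{F}$ in its second argument (the reward term is linear in $r^\pi$, and $W(d)$ is Lipschitz in the underlying distributions with respect to, say, total variation times the diameter bound) one gets that the composite iteration $d_{k+1}=\mathcal{F}(d_k,\pi_k)$ is an asymptotically autonomous contraction converging to the fixed point of $\mathcal{F}(\cdot,\pi^*)$. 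To get the \emph{least} fixed point claim, I would invoke monotonicity: $\mathcal{F}(\cdot,\pi)$ is order-preserving on pseudometrics ($d_1\le d_2$ pointwise $\Rightarrow$ $W(d_1)\le W(d_2)$ $\Rightarrow$ $\mathcal{F}(d_1,\pi)\le\mathcal{F}(d_2,\pi)$), so starting the iteration from the zero pseudometric and using the Knaster--Tarski / Kleene fixed-point theorem on the complete lattice of bounded pseudometrics gives the least fixed point $\tilde d$, which by uniqueness (for fixed $\pi^*$) coincides with the Banach fixed point.

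The main obstacle I anticipate is making the "continuously improving policy" hypothesis precise enough to justify the two-timescale / asymptotically-autonomous limiting argument rigorously: one needs either (i) that $\pi$ is held fixed and the theorem is really about $\mathcal{F}(\cdot,\pi)$ for each $\pi$ in the improvement sequence, with the $\pi^*$ statement following by passing to the limit, or (ii) a genuine joint-convergence argument controlling the interaction between the metric iteration and the policy iteration. I would go with route (i): prove the contraction/fixed-point statement for each fixed $\pi$, show the fixed points $d^\pi$ depend continuously on $\pi$ through $(r^\pi, P^\pi)$, and conclude $d^{\pi_k}\to d^{\pi^*}=\tilde d$. A secondary technical point is ensuring $W(d)$ is well defined and finite on $H$ — this needs the pseudometrics to be bounded (guaranteed by $R_{\max}$ and $c<1$ on the bounded orbit) and the transition distributions $P^\pi_h$ to be genuine probability measures on $H$, which holds by construction of the lifted CMDP. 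None of these steps is deep; the care is in the bookkeeping of the lift to $H=Z\times\Theta$ and in not over-claiming about the policy dynamics.
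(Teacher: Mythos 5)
Your proposal is correct, but it takes a genuinely different route from the paper. The paper's proof does not re-derive the contraction at all: it constructs the super-MDP $\mathcal{M}_{super}$ on $H=Z\times\Theta$, observes that under the block-structure assumption (\cref{as:block}) each context-restricted MDP $\mathcal{M}_\theta$ is isomorphic to the base MDP and all three are homomorphic in the sense of \citep{ravindran2004algebraic}, maps the policy dynamics on $H$ down to the base state space $S$, and then directly invokes the metric-convergence result (Theorem 1) of \citep{zhangLearningInvariantRepresentations2021}. You instead prove the result from first principles on $H$: completeness of $(\mathfrak{met},\rho)$, the $c$-contraction of $d\mapsto\mathcal{F}(d,\pi)$ via the feasible-coupling bound on the Wasserstein term, Banach for each fixed $\pi$, continuity of $d^\pi$ in $(r^\pi,P^\pi)$ to handle the improving-policy sequence, and Kleene iteration from the zero pseudometric for the \emph{least} fixed point. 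What your approach buys: it is self-contained, it does not need \cref{as:block} (which the paper states only \emph{after} the theorem yet silently relies on in its proof), and it actually substantiates the ``least fixed point'' and ``continuously improving'' clauses, which the paper delegates to the cited theorem without elaboration. What the paper's approach buys: brevity, and it surfaces the structural fact (context-restricted MDPs are isomorphic copies of the base MDP) that is reused in the subsequent aggregation and generalization results. Your anticipated obstacle about the two-timescale interaction is real but is equally unaddressed in the paper's source (and in the DBC proof it cites), so route (i) is the appropriate level of rigor here.
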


We next discuss an important assumption we need to make towards obtaining generalization results for the observation shifts. 
\begin{assumption}[Block structure]
We assume that $f_{\theta_1}(s_1) \cap f_{\theta_2}(s_2) \neq \emptyset \implies s_i=s_j, \forall \theta_1,\theta_2$ so that the observation map is invertible.
\label{as:block}
\end{assumption}
This means that the observation space $Z$ can be partitioned into disjoint blocks, each containing the support for a particular value of $s \in S$ \citep{du2019pcid}. This also ensures that inverse observation map $f_{\theta}^{-1}: Z\to S$ exists. Relaxing \cref{as:block} can break any guarantees obtainable on value function similarities arising from state similarity. This is because the same observation can get mapped to entirely different states in the latent MDP each with very different values, making the environment only partially observable. Note however that this requirement is not too restrictive, it is possible to consider added noise scenarios (both independent and correlated e.g. see ~\citep{zhangInvariantCausalPrediction2020}) which maintain identifiability of the state. Finally, many real-world task observations tend to satisfy this assumption for high dimensional scenarios: e.g. visual projection of non-degenerate objects under different viewing angles.

We next discuss the implications of having learnt an representation $\phi$ which approximately preserves the  $\pi^*$-bisimulation metric distances. 
\begin{theorem}[Aggregation value bound]
\label{thm:value_bound}
Given an MDP $\hat{\mathcal{M}}$ constructed by aggregating tuples $h$ of observation, context in an $\epsilon$-neighborhood of the representation space such that $\delta \triangleq \max_{s,s', \theta_i, \theta_j}||\phi(f_{\theta_i}(s), \theta_i)-\phi(f_{\theta_j}(s'), \theta_j)|-d_{S }(s,s')|$, where $d_{S }$ is a $\pi^*$-bisimulation metric on $S$. Further let $\hat \phi$ denote the map from any $h$ to these clusters, the optimal value functions for the two MDPs follow:
\begin{align}
|V^*(h) - \hat V^*(\hat\phi(h))| \leq \frac{2(\epsilon+\delta)}{(1-\gamma)(1-c)} \forall h\in Z\times\Theta
\end{align}
\end{theorem}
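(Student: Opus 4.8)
### Proof Proposal

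The plan is to establish the bound by comparing the optimal value functions of $\mathcal{M}$ (on the joint space $H = Z\times\Theta$) and the aggregated MDP $\hat{\mathcal{M}}$ through two successive triangle-inequality arguments, leaning on the value bound already known for $\pi^*$-bisimulation metrics on $S$. First I would recall the standard result (the analogue of Theorem 5.1 in \citep{ferns2011contbisim} / the bisimulation value bound in \citep{zhangLearningInvariantRepresentations2021}) that for any $\pi^*$-bisimulation metric $d_S$ on $S$ and any two states $s, s'$, we have $|V^*(s) - V^*(s')| \le \frac{1}{1-c}\, d_S(s,s')$, or more precisely a Lipschitz-type bound with constant depending on $c$ only. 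Then the quantity $\delta$ in the statement measures how far the learnt representation's pairwise distances deviate from $d_S$-distances of the underlying states, so for any $h = (z,\theta)$ and $h' = (z',\theta')$ with underlying states $s = f_\theta^{-1}(z)$ and $s' = f_{\theta'}^{-1}(z')$ (well-defined by \cref{as:block}), we get $\big|\,\|\phi(z,\theta) - \phi(z',\theta')\|\ -\ d_S(s,s')\,\big| \le \delta$.

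The key steps, in order: (i) Fix $h\in H$ and let $\hat\phi(h)$ be its cluster; by construction of the $\epsilon$-aggregation, there is a representative $h^\star = (z^\star,\theta^\star)$ of that cluster with $\|\phi(z,\theta) - \phi(z^\star,\theta^\star)\| \le \epsilon$. (ii) Combining with the $\delta$-bound, the underlying states satisfy $d_S(s, s^\star) \le \epsilon + \delta$. (iii) Apply the bisimulation value bound on $S$ to get $|V^*(h) - V^*(h^\star)| = |V^*(s) - V^*(s^\star)| \le \frac{\epsilon+\delta}{1-c}$, using that the value on the joint space equals the value on the underlying state (a consequence of \cref{as:block} and the isomorphism noted in \cref{rocmdp} — same rewards, same transitions under the state-preserving correspondence, so $V^*$ factors through $f_\theta^{-1}$). (iv) Bound the difference between $V^*$ on $\mathcal{M}$ restricted to the cluster and $\hat V^*$ on $\hat{\mathcal{M}}$: this is the standard "state aggregation introduces bounded error" argument — aggregating states whose values differ by at most $2\cdot\frac{\epsilon+\delta}{1-c}$ and whose rewards/transition images are correspondingly close yields, via a contraction/fixed-point argument on the Bellman operator of $\hat{\mathcal{M}}$, an extra factor of $\frac{1}{1-\gamma}$, giving $|\hat V^*(\hat\phi(h)) - V^*(h^\star)| \le \frac{2(\epsilon+\delta)}{(1-\gamma)(1-c)}$. (v) Triangle-inequality the pieces from (iii) and (iv); the dominant term is (iv), absorbing (iii), and we obtain the claimed $|V^*(h) - \hat V^*(\hat\phi(h))| \le \frac{2(\epsilon+\delta)}{(1-\gamma)(1-c)}$.

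I expect the main obstacle to be step (iv): making precise how the aggregation map $\hat\phi$ defines the rewards and transition kernel of $\hat{\mathcal{M}}$ (e.g. as averages over clusters, or via a chosen representative) and then showing that the fixed point $\hat V^*$ of the resulting Bellman operator stays within $\frac{1}{1-\gamma}$ times the intra-cluster value spread of the original MDP. This requires a careful argument that the Bellman optimality operator of $\hat{\mathcal{M}}$ is a $\gamma$-contraction whose fixed point is compared to $V^*$ via $\|\hat V^* - V^*\circ(\text{lift})\|_\infty \le \frac{1}{1-\gamma}\|\hat{\mathcal{T}} (V^*\circ\text{lift}) - V^*\circ\text{lift}\|_\infty$, and bounding the one-step Bellman residual by the intra-cluster variation $2(\epsilon+\delta)/(1-c)$. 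A secondary subtlety is verifying that $V^*$ on the joint observation space genuinely coincides with $V^*$ on $S$ pulled back through $f_\theta^{-1}$ — this is where \cref{as:block} is essential, since without invertibility of $f_\theta$ the joint-space MDP need not even be Markov, and the bound would fail.
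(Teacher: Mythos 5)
Your overall strategy --- use \cref{as:block} to identify each $h=(z,\theta)$ with an underlying state, translate $\epsilon$-closeness in representation space into $(\epsilon+\delta)$-closeness in the bisimulation metric $d_S$, and then invoke aggregation machinery for bisimulation metrics --- is the same reduction the paper makes. The difference is that the paper then applies Theorem 5.2 of \citep{ferns2004bisimulation} in one shot, which gives $(1-c)|V^*(h)-\hat V^*(\hat\phi(h))|\leq g(s,\tilde d)+\frac{\gamma}{1-\gamma}\max_{s'}g(s',\tilde d)$ with $g$ the average intra-cluster distance; substituting $g\leq 2(\epsilon+\delta)$ yields exactly $\frac{2(\epsilon+\delta)}{(1-\gamma)(1-c)}$. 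You instead try to re-derive that aggregation lemma by splitting into (iii) ``move to a cluster representative'' and (iv) ``compare the representative's value to $\hat V^*$,'' and this is where the proof does not close.

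Two concrete problems. First, the accounting in step (v) is wrong: the triangle inequality gives the \emph{sum} of the bounds from (iii) and (iv), namely $\frac{\epsilon+\delta}{1-c}+\frac{2(\epsilon+\delta)}{(1-\gamma)(1-c)}=\frac{(3-\gamma)(\epsilon+\delta)}{(1-\gamma)(1-c)}$, which strictly exceeds the claimed constant for every $\gamma\in[0,1)$; a smaller term cannot simply be ``absorbed'' by a larger one. Your decomposition double-counts the immediate one-step error, which in the Ferns et al.\ bound appears once as $g(s,\tilde d)$ while only the propagated part carries the $\frac{\gamma}{1-\gamma}$ factor. Second, step (iv) --- the contraction/Bellman-residual argument, including a precise definition of $\hat{\mathcal{M}}$'s rewards and transitions over clusters --- is exactly the content of the cited aggregation theorem, and you leave it as a sketch while correctly flagging it as the main obstacle. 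The repair is either to cite that theorem directly (as the paper does) or to run a single Bellman-residual argument on $V^*$ itself rather than on a representative, bounding the residual by the intra-cluster reward gap plus $\gamma$ times the Wasserstein transition gap, both of which are controlled by the cluster diameter $2(\epsilon+\delta)$ through the fixed-point property of $\tilde d$.
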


Note how the value estimate accuracy from aggregation is fundamentally bottle-necked by the representation learning error $\delta$, this means that even the finest partitions (which use small $\epsilon$) using $\phi$ will give value approximation only as good as the underlying representation.

We now state the Lipschitz continuity assumptions we use for further analysis. The first \cref{as:lipobs} concerns the change in observations $z$ as the context $\theta$ changes. Several natural domains like visual projections satisfy this.  
\begin{assumption}
$f$ is Lipschitz with coefficient $L_{\theta}^f$ with respect to (w.r.t.) $\theta$.
\label{as:lipobs}
\end{assumption}

Next, we assume that the representation map $\phi$ and the policy $\pi$ which conditions on the representations $y$ are also Lipschitz w.r.t. the inputs. This can be enforced in practice for example for deep neural networks approximators \citep{virmaux2018Lipschitz, gouk2021regularisation}. 
\begin{assumption}
$\phi$ is Lipschitz w.r.t. $z$ and $\theta$ with coefficients $L_{z}^{\phi}, L_{\theta}^{\phi}$ respectively. Similarly, $\pi$ is Lipschitz with coefficient $L_{y}^{\pi}$ where the distance metric on the policy space is $d_{TV}$ \footnote{note that $L_{y}^{\pi}$ has the effect of squeezing inflations caused by $L_{\theta}^f$ and $L_{z}^{\phi}$ as $d_{TV}$ is a bounded metric}, the total variation metric on space of action distributions $\mathcal{P}(U)$.
\label{as:lipp}
\end{assumption}

We now discuss the amount of generalization which we can expect when a policy assuming context $\theta_i$ is run on observation coming from the context $\theta_j$. This can happen for example in scenarios when a shift in observations happens like change in the calibration settings of an autonomous vehicle's sensors. We introduce the notation $\pi_{\theta_i\leftarrow \theta_j}$ to represent the policy obtained from sampling action w.r.t. the restriction $\pi_{\theta_i}$ but using observation inputs from the context $\theta_j$ (ie. $\pi(a|\phi(f_{\theta_j}(s),\theta_i))$). 
\begin{theorem}[Generalization to unseen context]
\label{th:gen}
Under \cref{as:lipobs}, \cref{as:lipp} we have that for any two contexts $\theta_i, \theta_j$:
\begin{align}
    |J^{\pi_{\theta_i}}-J^{\pi_{\theta_i\leftarrow \theta_j}}|\leq& \frac{1}{1-\gamma} E_{\substack{s\sim f_{\theta}^{-1}\rho^{\pi_{\theta_i}},\\ a\sim\pi_{\theta_i\leftarrow \theta_j}}}\Big[ A^{\pi_{\theta_i}}(s, a) + \frac{2\gamma A_{max}}{1-\gamma}L_{\theta}^{f}L_{z}^{\phi}L_{y}^{\pi}d_\Theta(\theta_i, \theta_j) \Big]
\end{align}
where $A_{max}\triangleq \max_s |E_{a\sim\pi_{\theta_i\leftarrow \theta_j}}[A^{\pi_{\theta_i}}(s, a)]|$ and $d_\Theta$ is a metric on the context space.
\end{theorem}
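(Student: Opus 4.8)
The plan is to obtain the bound from the Kakade--Langford performance difference lemma (PDL), fold the discrepancy between the state-visitation distributions of $\pi_{\theta_i}$ and $\pi_{\theta_i\leftarrow\theta_j}$ into the explicit Lipschitz correction term, and at the very end chain the three constants of \cref{as:lipobs}--\cref{as:lipp}. First note that in the parametrized-observation setting of \cref{rocmdp} there is a single underlying MDP $(S,U,P,r,\gamma,\rho)$ and only $f$ depends on the context; hence $\pi_{\theta_i}$ and $\pi_{\theta_i\leftarrow\theta_j}$ are ordinary policies on that one MDP and PDL applies verbatim: writing $d^{\mu}$ for the normalized discounted state-visitation distribution of a policy $\mu$ (so that $d^{\pi_{\theta_i}}$ is exactly the pushforward $f_{\theta}^{-1}\rho^{\pi_{\theta_i}}$, using invertibility of $f_\theta$, cf.\ \cref{as:block}),
\[
J^{\pi_{\theta_i\leftarrow\theta_j}}-J^{\pi_{\theta_i}}=\tfrac{1}{1-\gamma}\,\mathbb{E}_{s\sim d^{\pi_{\theta_i\leftarrow\theta_j}},\,a\sim\pi_{\theta_i\leftarrow\theta_j}}\!\big[A^{\pi_{\theta_i}}(s,a)\big].
\]

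Next I swap the visitation distribution for the one appearing in the statement. Let $\bar A(s)\triangleq\mathbb{E}_{a\sim\pi_{\theta_i\leftarrow\theta_j}(\cdot|s)}[A^{\pi_{\theta_i}}(s,a)]$, which satisfies $|\bar A(s)|\le A_{max}$ by definition of $A_{max}$. Adding and subtracting $\mathbb{E}_{s\sim d^{\pi_{\theta_i}}}[\bar A(s)]$ and bounding by H\"older's inequality gives
\[
\Big|(1-\gamma)\big(J^{\pi_{\theta_i\leftarrow\theta_j}}-J^{\pi_{\theta_i}}\big)-\mathbb{E}_{s\sim d^{\pi_{\theta_i}},a\sim\pi_{\theta_i\leftarrow\theta_j}}[A^{\pi_{\theta_i}}(s,a)]\Big|\le A_{max}\,\big\|d^{\pi_{\theta_i\leftarrow\theta_j}}-d^{\pi_{\theta_i}}\big\|_{1}.
\]
I then invoke the standard occupancy-measure stability estimate $\|d^{\mu}-d^{\nu}\|_{1}\le\tfrac{2\gamma}{1-\gamma}\sup_{s}d_{TV}(\mu(\cdot|s),\nu(\cdot|s))$, proved by coupling the two trajectory distributions so that they take identical actions whenever they occupy the same state: the probability they have separated by step $t$ is at most $t\sup_s d_{TV}$, hence the time-$t$ state marginals differ by at most that amount in $d_{TV}$, and summing $2(1-\gamma)\gamma^{t}$ against these (with $\sum_t t\gamma^t=\gamma/(1-\gamma)^2$) produces the constant.

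It remains to bound the per-state policy gap. For any $s$, $\pi_{\theta_i}(\cdot|s)=\pi(\cdot\,|\,\phi(f_{\theta_i}(s),\theta_i))$ and $\pi_{\theta_i\leftarrow\theta_j}(\cdot|s)=\pi(\cdot\,|\,\phi(f_{\theta_j}(s),\theta_i))$ --- crucially the context slot handed to $\phi$ is $\theta_i$ in \emph{both}, so only $\phi$'s observation argument changes. Applying the Lipschitz property of $\pi$ (\cref{as:lipp}), then of $\phi$ in its $z$-slot with constant $L_z^{\phi}$ (\cref{as:lipp}), then of $f$ in $\theta$ (\cref{as:lipobs}),
\[
d_{TV}\big(\pi_{\theta_i}(\cdot|s),\pi_{\theta_i\leftarrow\theta_j}(\cdot|s)\big)\le L_y^{\pi}L_z^{\phi}\,\|f_{\theta_i}(s)-f_{\theta_j}(s)\|\le L_y^{\pi}L_z^{\phi}L_\theta^{f}\,d_\Theta(\theta_i,\theta_j),
\]
uniformly in $s$. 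Combining with the previous step yields $\|d^{\pi_{\theta_i\leftarrow\theta_j}}-d^{\pi_{\theta_i}}\|_{1}\le\tfrac{2\gamma}{1-\gamma}L_\theta^{f}L_z^{\phi}L_y^{\pi}d_\Theta(\theta_i,\theta_j)$; substituting into the H\"older bound, dividing by $1-\gamma$, applying the triangle inequality, and folding the state-independent correction constant inside the expectation over $s\sim f_{\theta}^{-1}\rho^{\pi_{\theta_i}}$, $a\sim\pi_{\theta_i\leftarrow\theta_j}$ gives exactly the stated inequality (replacing $A^{\pi_{\theta_i}}$ by $|A^{\pi_{\theta_i}}|$ in the integrand if the literal displayed form is wanted, via $|\mathbb{E}[\cdot]|\le\mathbb{E}[|\cdot|]$).

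The only step that is not routine bookkeeping is the occupancy-measure stability estimate: pinning the drift of the discounted state-visitation distribution to the worst-case per-state policy TV gap with the precise $\tfrac{2\gamma}{1-\gamma}$ factor. The coupling argument itself is short, but care is needed with the $d_{TV}$-versus-$\ell_1$ factor of two and with the normalization convention for $d^{\mu}$ so that the constants line up with the statement; everything around it (PDL, H\"older, composition of Lipschitz maps) is mechanical.
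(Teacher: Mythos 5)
Your proof is correct and follows essentially the same route as the paper: bound the per-state total-variation gap between $\pi_{\theta_i}$ and $\pi_{\theta_i\leftarrow\theta_j}$ by composing the three Lipschitz constants (as in \cref{le:lipcomp}), then feed that uniform bound into the advantage-based performance-difference inequality. The only difference is that the paper invokes Corollary 1 of \citep{achiam2017constrained} as a black box, whereas you re-derive it inline from the performance difference lemma together with the $\tfrac{2\gamma}{1-\gamma}$ occupancy-measure stability estimate --- which is exactly the argument underlying the cited corollary.
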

\cref{th:gen} gives us the upper bound on the deviation of the expected returns when the agent expects an environment with context $\theta_i$ but is actually deployed in an environment with context $\theta_j$. 

We next discuss the important performance transfer scenarios when the simulator used for training a policy is not exact. These bounds are useful for situations where it is required to access tolerance of agent performance w.r.t. situations like sim to real deployment. Our first result addresses the setting where the simulator dynamics is not exact w.r.t. the real world, introducing errors $\epsilon_{R}, \epsilon_{P}$. 
\begin{theorem}[Simulator fidelity bound]
\label{th:simfid}
For an approximately correct simulator ($\hat r, \hat P$) such that $\max_{s,a}|\hat r(s,a)-r(s,a)|\leq \epsilon_R$ and $\max_{s,a} d_{TV}(\hat P(s, a),P(s,a))\leq \epsilon_P$ we have for any policy $\pi$:
\begin{align}
|J^\pi - \hat J^\pi| \leq \frac{\epsilon_R}{(1-\gamma)} + \frac{\gamma \epsilon_P R_{max}}{(1-\gamma)^2}
\end{align}
\end{theorem}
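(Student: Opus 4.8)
# Proof Proposal for Theorem (Simulator fidelity bound)

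\textbf{Overall approach.} The plan is to bound the difference in policy values by a telescoping argument on the Bellman operators of the two MDPs, controlling the reward-mismatch term and the transition-mismatch term separately. The standard tool here is a simulation-lemma style decomposition: write $J^\pi - \hat J^\pi$ as an expectation over the discounted state-action occupancy, split the per-step error into a reward part and a transition part, and then sum the geometric series. I expect the reward part to contribute $\epsilon_R/(1-\gamma)$ and the transition part to contribute $\gamma\epsilon_P R_{\max}/(1-\gamma)^2$, matching the claimed bound.

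\textbf{Key steps.} First I would fix the policy $\pi$ and consider the value functions $V^\pi$ (under the true $r,P$) and $\hat V^\pi$ (under $\hat r,\hat P$), both satisfying their respective Bellman consistency equations $V^\pi = r^\pi + \gamma P^\pi V^\pi$ and $\hat V^\pi = \hat r^\pi + \gamma \hat P^\pi \hat V^\pi$, where $r^\pi, P^\pi$ denote the policy-averaged reward and transition operators. Second, I would subtract these two equations and add-and-subtract a cross term, e.g.\ $\gamma \hat P^\pi V^\pi$, to obtain
\begin{align}
V^\pi - \hat V^\pi = (r^\pi - \hat r^\pi) + \gamma(P^\pi - \hat P^\pi)V^\pi + \gamma \hat P^\pi (V^\pi - \hat V^\pi).
\end{align}
Third, I would unroll this recursion (or equivalently apply $(I - \gamma \hat P^\pi)^{-1}$, which is a valid bounded operator since $\gamma < 1$) to get $V^\pi - \hat V^\pi = (I - \gamma \hat P^\pi)^{-1}\big[(r^\pi - \hat r^\pi) + \gamma(P^\pi - \hat P^\pi)V^\pi\big]$. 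Fourth, I would bound the two pieces in sup-norm: $\|r^\pi - \hat r^\pi\|_\infty \le \epsilon_R$ directly from the hypothesis (averaging over $\pi$ preserves the bound), and $\|(P^\pi - \hat P^\pi)V^\pi\|_\infty \le \epsilon_P \cdot \|V^\pi\|_\infty \cdot$ (a constant depending on the TV normalization)$\le \epsilon_P R_{\max}/(1-\gamma)$, using that $\|V^\pi\|_\infty \le R_{\max}/(1-\gamma)$ and that integrating a function bounded by $\|V^\pi\|_\infty$ against the difference of two distributions with TV distance at most $\epsilon_P$ costs at most $\epsilon_P \|V^\pi\|_\infty$ (up to a factor of $2$ depending on the TV convention; I would state the convention used so the constant comes out as claimed). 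Finally, since $(I-\gamma\hat P^\pi)^{-1}$ has operator norm $1/(1-\gamma)$ on $L^\infty$, I would combine: $\|V^\pi - \hat V^\pi\|_\infty \le \frac{1}{1-\gamma}\big(\epsilon_R + \gamma \epsilon_P R_{\max}/(1-\gamma)\big) = \frac{\epsilon_R}{1-\gamma} + \frac{\gamma\epsilon_P R_{\max}}{(1-\gamma)^2}$, and then note $|J^\pi - \hat J^\pi| = |\E_\rho[V^\pi - \hat V^\pi]| \le \|V^\pi - \hat V^\pi\|_\infty$ since the initial distribution $\rho$ is shared.

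\textbf{Main obstacle.} The step most likely to need care is getting the transition term's constant exactly right: the bound $|\int V\, d(P - \hat P)| \le \epsilon_P \|V\|_\infty$ requires a particular normalization of $d_{TV}$ (the one where $d_{TV}(P,\hat P) = \sup_A |P(A)-\hat P(A)| = \frac12\|P - \hat P\|_1$), and a different convention introduces a spurious factor of $2$. I would pin down the convention at the outset so the final constants in the statement are recovered cleanly. Everything else is routine Bellman-operator bookkeeping.
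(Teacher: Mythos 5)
Your proposal is correct and follows essentially the same route as the paper's proof: both start from the two Bellman consistency equations, add and subtract a cross term, bound the reward mismatch by $\epsilon_R$ and the transition mismatch via H\"older against the TV distance, and then solve the resulting recursion in $\|V^\pi - \hat V^\pi\|_\infty$ (your $(I-\gamma \hat P^\pi)^{-1}$ step is the paper's rearrangement). The factor-of-2 issue you correctly flag as the main obstacle is resolved in the paper exactly as you anticipate: it centers the value vector with the baseline $\frac{R_{max}}{2(1-\gamma)}\mathbf{1}$ (legitimate because $P - \hat P$ has total mass zero) before applying H\"older, which turns $\|P-\hat P\|_1 \le 2\epsilon_P$ into the clean $\epsilon_P R_{max}/(1-\gamma)$ contribution.
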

Next, we consider the case where in addition to the latent transition and reward dynamics, the simulator emission function $\hat f$ is also approximate. 
Let $\epsilon_f  \triangleq \max_{s, \theta} d_{Y}(\phi(\hat f_\theta(s)), \phi(f_\theta(s)))$.
We are interested in the setting where the policy learns from an approximate simulator ($\hat r, \hat P, \hat f$) but the resultant learnt policy is deployed in the actual world $( R,  P, f)$. Note that this is a common practical setting as most simulators, even after knowing the actual underlying state, cannot completely capture the richness in the observations found in the real world. The below result relates the simulator policy performance ($\hat f$) to the one obtained by running the simulator policy on real observations ($f$). 
\begin{theorem}[Complete simulator fidelity bound]
\label{th:simfid_all}
For an approximately correct simulator ($\hat r, \hat P, \hat f$) such that $\max_{s,a}|\hat r(s,a)-r(s,a)|\leq \epsilon_R$, $\max_{s,a} d_{TV}(\hat P(s, a),P(s,a))\leq \epsilon_P$ and $\epsilon_f  \triangleq \max_{s, \theta} d_{Y}(\phi(\hat f_\theta(s)), \phi(f_\theta(s)))$, we have for any policy $\pi$:
\begin{align}
|J^{\pi_{\hat{f}\leftarrow f}} - \hat J^{\pi_{\hat{f}}}| &\leq \frac{\epsilon_R}{(1-\gamma)} + \frac{\gamma \epsilon_P R_{max}}{(1-\gamma)^2}+ \frac{1}{1-\gamma} E_{\substack{s\sim \hat f^{-1}\rho^{\pi_{\hat{f}}},\\ a\sim\pi_{\hat{f}\leftarrow f}}}\Big[ A^{\pi_{\hat{f}}}(s, a)+ \frac{2\gamma A_{max}}{1-\gamma}L_{y}^{\pi}\epsilon_f \Big].
\end{align}
\end{theorem}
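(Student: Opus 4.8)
\textbf{Proof proposal for \cref{th:simfid_all}.} The plan is to decompose the quantity $|J^{\pi_{\hat{f}\leftarrow f}} - \hat J^{\pi_{\hat{f}}}|$ through an intermediate term and apply two results already established in the excerpt. Concretely, I would insert $\hat J^{\pi_{\hat f}}$'s natural ``real-latent-dynamics'' counterpart: let $J^{\pi_{\hat f}}$ denote the value of the simulator policy $\pi_{\hat f}$ when the latent transition/reward dynamics are the true ones $(r,P)$ but the emission is still $\hat f$. Then by the triangle inequality,
\begin{align}
|J^{\pi_{\hat{f}\leftarrow f}} - \hat J^{\pi_{\hat{f}}}| \;\leq\; |J^{\pi_{\hat f}} - \hat J^{\pi_{\hat f}}| \;+\; |J^{\pi_{\hat{f}\leftarrow f}} - J^{\pi_{\hat f}}|.
\end{align}
The first summand is controlled by \cref{th:simfid}: since $\pi_{\hat f}$ is a fixed policy and the only difference between the two value functions is the reward/transition kernel discrepancy $(\epsilon_R, \epsilon_P)$, that theorem gives directly $|J^{\pi_{\hat f}} - \hat J^{\pi_{\hat f}}| \leq \frac{\epsilon_R}{1-\gamma} + \frac{\gamma\epsilon_P R_{max}}{(1-\gamma)^2}$.

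The second summand is where the emission-function error $\epsilon_f$ enters, and it is handled by adapting the argument of \cref{th:gen}. There, the gap $|J^{\pi_{\theta_i}} - J^{\pi_{\theta_i\leftarrow\theta_j}}|$ was bounded using a performance-difference-lemma style expansion plus the observation that $\pi(\cdot\mid\phi(f_{\theta_j}(s),\theta_i))$ and $\pi(\cdot\mid\phi(f_{\theta_i}(s),\theta_i))$ differ in $d_{TV}$ by at most $L_y^\pi$ times the representation displacement $L_\theta^f L_z^\phi d_\Theta(\theta_i,\theta_j)$. Here the two emissions being compared are $\hat f$ and $f$ at a \emph{fixed} context, so the representation displacement $d_Y(\phi(\hat f_\theta(s)),\phi(f_\theta(s)))$ is bounded directly by $\epsilon_f$ by definition, and the Lipschitz-in-$\phi$ factor $L_y^\pi$ converts this into a $d_{TV}$ bound on the action distributions. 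Re-running the performance difference lemma expansion of \cref{th:gen} verbatim, with $L_\theta^f L_z^\phi d_\Theta(\theta_i,\theta_j)$ replaced everywhere by $\epsilon_f$, yields
\begin{align}
|J^{\pi_{\hat{f}\leftarrow f}} - J^{\pi_{\hat f}}| \;\leq\; \frac{1}{1-\gamma} E_{\substack{s\sim \hat f^{-1}\rho^{\pi_{\hat f}},\\ a\sim\pi_{\hat f\leftarrow f}}}\Big[ A^{\pi_{\hat f}}(s,a) + \frac{2\gamma A_{max}}{1-\gamma}L_y^\pi\epsilon_f \Big].
\end{align}
Adding the two bounds gives the claimed inequality.

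The main obstacle I anticipate is making the intermediate object $J^{\pi_{\hat f}}$ (true latent dynamics, simulator emission) rigorous and checking that \cref{th:simfid} genuinely applies to it: one must verify that after composing with the block-invertible emission $\hat f$ (\cref{as:block}), the induced process on $H = Z\times\Theta$ is still an MDP whose reward and transition discrepancies from the ``fully real'' process are exactly the pulled-back $\epsilon_R, \epsilon_P$, with no extra inflation from the emission map (this is where invertibility of $\hat f$ is essential — it lets the policy on observations be identified with a policy on latent states). A secondary technical point is ensuring the change-of-measure in the second summand is stated against the correct occupancy measure $\hat f^{-1}\rho^{\pi_{\hat f}}$ and that $A_{max}$ is interpreted consistently with \cref{th:gen}; these are bookkeeping issues rather than genuine difficulties, but they are easy to get subtly wrong.
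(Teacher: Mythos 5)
Your proposal is correct and follows essentially the same route as the paper: a triangle inequality through an intermediate hybrid simulator, with the dynamics leg controlled by \cref{th:simfid} and the emission leg controlled by the bound $d_{TV}(\pi_{\hat f\leftarrow f},\pi_{\hat f})\leq L_y^\pi\epsilon_f$ fed into the performance-difference argument of \cref{th:gen}. The only difference is the choice of intermediate point: you route through $(r,P,\hat f)$ (true latent dynamics, simulator emission), whereas the paper routes through $(\hat r,\hat P,f)$ with the policy $\pi_{\hat f\leftarrow f}$; this symmetric variation changes only whether the advantage and occupancy quantities in the emission leg are measured under the real or the simulated latent dynamics, a bookkeeping distinction you already flag and which the theorem statement itself leaves unresolved.
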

$\pi_{\hat{f}\leftarrow f}$ represents the sampling of actions from $\pi_{\hat{f}}$ but using the observations obtained under the (real world) observation function $f$.  Thus, the above two results (Theorems \ref{th:simfid} and \ref{th:simfid_all}) are particularly useful for the realistic scenario where we have imprecise simulation dynamics.

\section{Experiments}
\label{sec:exps}
We perform experiments towards understanding whether our method Robust Conditional Bisimulation (RCB) helps learn representations which generalize better to observation shifts. Towards this, we use the DeepMind control suite (DMC, \cite{tassaDeepMindControlSuite2018}) which uses Mujoco \citep{todorov2012mujoco} as the base simulator. We create new tasks for various agent morphologies where we learn to control the agent using image based input. Further, we also modify the simulator to have 3D spheres randomly bouncing in the environment, which contribute towards noise (we call this Modified-DMC). Note that this noise setting is harder than the simple-distractor setting in \cite{zhangLearningInvariantRepresentations2021} as the agent has to learn to model 3D noise across different visual perspectives (see \cref{fig:bisim_losses}). We use two baselines for comparison: 
\begin{enumerate}
    \item DeepMDP~\citep{gelada2019deepmdp} which uses reward and forward dynamics predictability for learning a latent representation space.
    \item A reconstruction based agent which uses a reward model and an image reconstruction based emission model to inform the representation.
\end{enumerate} We use SAC \citep{haarnoja2018soft} as the base algorithm for optimizing the MERL objective in \cref{alg:cbisim}. The architecture for common modules is kept similar across the methods. For fair comparison, we ensure that all the methods get equal access to the simulator experience and augment the representation learning objective for baselines with any extra simulator calls. Additional experimental setup details can be found in \cref{app:aed}.

\textbf{Modified-DMC:} For testing the ability to generalize across observation shifts, we use a uniform distribution over the range $P_\Theta = \mathcal{U}(-\pi/4, \pi/4)$ for the camera angle. At the beginning of each episode, we sample a camera angle context from $P_\Theta$, the agents must adapt to changing image perspectives across training and evaluation. For evaluation, we use a fixed set of camera angles: $\{-\pi/4,-\pi/8, 0, \pi/8, \pi/4\}$ over which we compute the agent performance during the evaluation phase and report the average across the angles as the performance metric. \cref{fig:bisim_exps} gives the evaluation performance plots for the agents on five different scenarios averaged over five seeds with one standard error shaded (our method \textcolor{blue}{RCB} in blue, \textcolor[rgb]{0,0.7,0}{DeepMDP} in green, \textcolor{red}{Reconstruction} in red). We see that RCB performs significantly better than the baseline agents on all the scenarios. RCB consistently achieves higher performance across the walker tasks (\Crefrange{fig:wsb}{fig:wrb}). We also note that the performance for Reconstruction worsens as the task becomes more dynamic, we hypothesize this is due to the lack of focus on the core features of the observation which influence the reward and dynamics. We observe a similar trend on the cheetah domain (\cref{fig:crb}) which is slightly easier than walker run. DeepMDP is often unable to perform satisfactorily in the training budget, we posit that this happens as it does not use any inter-context information to inform its representation. Thus, while it may learn close distance embeddings for a fixed context, the embeddings fare poorly across the contexts. RCB alleviates this problem by leveraging both the ICC and cross-consistency objectives in its formulation. We also note that generalization for the hopper domain (\cref{fig:hhb}) while doing pixel based control is especially hard given the environment stochasticity and the added background noise. 
\begin{figure}[H]
    \centering
    \vspace{-10pt}
    \subfigure[Walker Stand]{
        \includegraphics[width=0.31\textwidth]{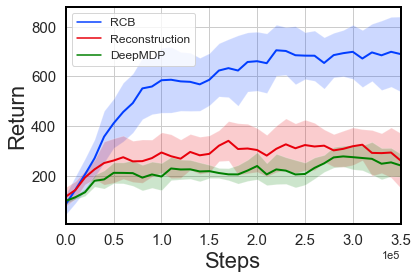}
        \label{fig:wsb}
    }
    \vspace{-5pt}
    \subfigure[Walker Walk]{
        \includegraphics[width=0.31\textwidth]{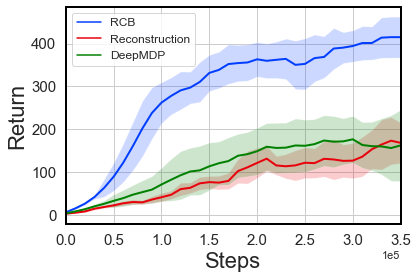}
        \label{fig:wwb}
    }
    \subfigure[Walker Run]{
        \includegraphics[width=0.31\textwidth]{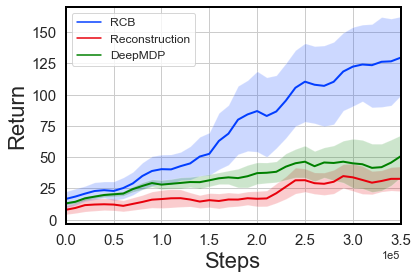}
        \label{fig:wrb}
    }
    \subfigure[Cheetah Run]{
        \includegraphics[width=0.31\textwidth]{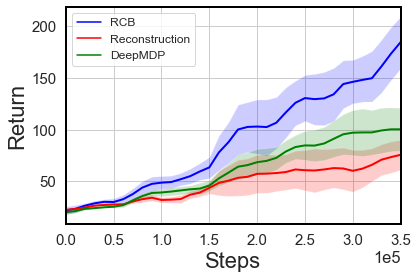}
        \label{fig:crb}
    }
        \vspace{-5pt}
    \subfigure[Hopper Hop]{
        \includegraphics[width=0.31\textwidth]{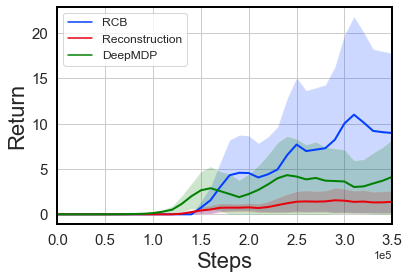}
        \label{fig:hhb}
    }
    \caption{Empirical results on modified-DMC observation generalization tasks for different methods: \textcolor{blue}{RCB} (our method), \textcolor[rgb]{0,0.7,0}{DeepMDP}, and \textcolor{red}{Reconstruction}.\label{fig:bisim_exps}}
\vspace{-15pt}
\end{figure}
\textbf{Out-of-distribution Generalization: }To test the ability of the algorithms in dealing with unseen observation contexts during test time, we train on Modified-DMC where we use a uniform distribution over the range $P_\Theta = \mathcal{U}(-3\pi/16, 3\pi/16)$ for the camera angle and test on the unseen $\{-\pi/4, \pi/4\}$ angles. \cref{fig:ood} gives the performance on the unseen angles for walker walk domain across the algorithms. Once again we note that RCB is able to better generalize to the unseen context due to its learning of a more accurate representation space using the inter-context objectives (\textbf{ICC} and \textbf{CC} terms in \cref{method}).
\begin{figure}[htb!]
\vspace{-10pt}
    \centering
    \subfigure[Out-of-distribution ($P_\Theta$)]{
        \includegraphics[width=0.31\textwidth]{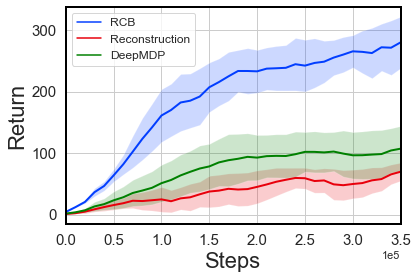}
        \label{fig:ood}
    }
    \subfigure[Ablation 1]{
        \includegraphics[width=0.31\textwidth]{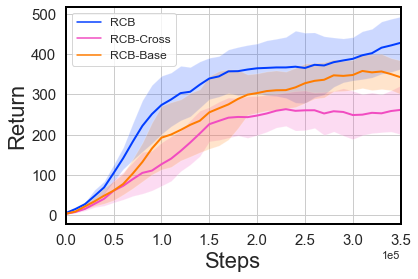}
        \label{fig:ab1}
    }
    \subfigure[Ablation 2]{
        \includegraphics[width=0.31\textwidth]{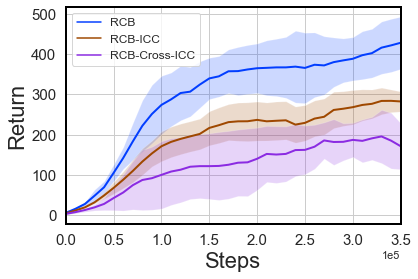}
        \label{fig:ab2}
    }
    \vspace{-5pt}
    \caption{Out-of-distribution generalization and ablations} %
\vspace{-10pt}
    \label{fig:oodab}
\end{figure}
\newline \textbf{Ablations: } To understand the effects of the different bisimulation loss components, we perform ablations removing each component. In \cref{fig:ab1} we remove the base (\textcolor{orange}{RCB-Base}) and cross consistency (\textcolor[rgb]{0.9450980392156862, 0.2980392156862745, 0.7568627450980392}{RCB-Cross}) bisimulation terms. We see that removing the cross term has a bigger effect on performance. We believe this is because the cross-bisimulation term has a stronger anchoring effect as it also implicitly accounts for both the base and inter-context terms (see \cref{fig:bisim_losses}). Next, in \cref{fig:ab2} we remove the inter-context consistency term (\textcolor{brown}{RCB-ICC}) and both the inter-context consistency and cross consistency terms (\textcolor{purple}{RCB-Cross-ICC}). We notice a slight decrease in performance arising from dropping the inter context consistency loss. The \textcolor{purple}{RCB-Cross-ICC} ablation is similar to DBC \citep{zhangLearningInvariantRepresentations2021} as it only contains base bisimulation losses (regular and alternate). We observe a significant decrease in performance in this latter ablation as we drop all the inter-context bisimulation terms helpful in generalization across the contexts. Thus it is important to ensure explicit alignment across the representations for the richly observed MDPs defined by different $\theta$ context when desiring good generalization across observation shifts.

\section{Related Work}
\vspace{-2mm}
State abstraction in MDPs has been researched from various perspectives including notions which aggregate state based on policy, values, action-values and dynamics ~\citep{li2006stateabs}. Bisimulation is the strictest form of abstraction based on MDP dynamics~\citep{larsen1989bisim, Givan2003EquivalenceNA}. Bisimulation metrics were introduced to relax the notion of exact bisimulation for practical applicability \citep{ferns2011contbisim}. \cite{castro20bisimulation} propose method for efficient computation of on policy variant of bisimulation metrics.
DBC \citep{zhangLearningInvariantRepresentations2021} use bisimulation metrics to learn task relevant features which are robust to noise in the environment. 
They learn to tie together states distinguishable only by task irrelevant noise using bisimulation for learning a representation.
We use the bisimulation framework to learn a representation which can \textit{invert} the change in observation space caused by the varying context, and can be seen as abstracting across the group of isomorphic MDPs indexed by the context. 
We also provide the first generalization bounds for this setting with important practical applications like sim to real transfer.
MDP homomorphism \citep{ravindran2004algebraic} is the principled framework of studying structural similarities across MDPs, this naturally extends the idea of state abstraction and opens the the way to leverage abstract similarities on a much broader scope. \cite{taylor2009bounding} propose lax probabilistic bisimulation using MDP homomorphisms. \cite{mahajan2017symmetry} use MDP isomorphism for learning symmetries for sample efficient reinforcement learning. \citep{mahajan2022generalization} provide combinatorial generalization bounds for the contextual multi agent setting.

\vspace{-2mm}
Representation learning for RL on high dimensional inputs has been studied using other methods in addition to bisimulation. \cite{Lange2012Autonomous} use reconstruction of image inputs using auto-encoders for learning a latent control state. This was later extended to include modelling of the MDP dynamics \citep{watter2015embed, hafner2018planet}. \cite{gelada2019deepmdp} use a latent dynamics model approach for control and show its relation to bisimulation.
Data Augmentation methods like \cite{laskin2020reinforcement} use various image transformations on agent observations for data efficient learning of policies for pixel based control. \cite{laskin_srinivas2020curl} use random crops on image data to be used under a contrastive based framework for representation learning. \cite{kostrikov2020image} use random image translations for regularising reinforcement learning from images by using multiple shifts to robustly estimate value function loss and targets. \cite{oord2018representation,chen2020simclr} use self-supervised contrastive approach to learn representations by enforcing similarity constraints between data points.

\vspace{-2mm}
Robust RL considers rewards maximization under adversarially varying dynamics for the environment. \cite{pinto2017robust} use a two agent zero-sum game to model adversarial noise towards learning robust policies. Similarly, \cite{stantonrobust} inject noise in the state space and optimise for a minimax problem for robustness, and \cite{tessler2019action} study the robustness problem under action perturbations. Rather, we discuss the setting of adapting to potentially unseen deployment scenarios and provide theoretical guarantees for the policy transfer. \citep{openendedlearningteam2021openended} use dynamic curricula for learning robust agent policies.
\cite{zhao2020sim} compile the various methods used in sim-to-real settings.
Domain randomization, particularly used in robotic vision tasks including object localization~\citep{tobin2017domain}, object detection~\citep{tremblay2018training}, pose estimation~\citep{sundermeyer2018implicit}, and semantic segmentation~\citep{yue2019domain}, varies the training data from simulator across properties like textures, lighting, and camera positions. While domain randomization aims to provide enough simulated variability of the parameters at training time to ensure the model is able to generalize to potentially unseen settings during test, it is often insufficient for getting good results on control tasks due to unutilized information of task structure. 
\vspace{-2mm}
\section{Conclusions \& Future Work} 
\vspace{-2mm}
In this we work we explored how bisimulation can be used to learn representation for RL towards generalization in complex high dimensional environment like visual inputs. We specially focused on learning policies invariant to observation shifts a problem which has several applications in the real world. Further, we analysed the theory of learning under the framework of conditional bisimualtion and proposed novel bounds characterizing state abstraction and generalization in this setting. Of particular importance were the results relating to performance guarantees across observation shifts when learning on a simulator. Finally, we evaluated our method on the modified DM-contol domain and showed its efficacy in comparison to the baseline approach. A current limitation of our theoretical analysis is that it requires invertability (\cref{as:block}), we aim to relax this in future work using notions of approximate invertability. Another limitation is that the algorithms used in experiments presently take the context vector as input, we aim to replace this with automatic context detection using unsupervised methods in future.

\bibliography{main}
\bibliographystyle{collas2023_conference}

\newpage
\appendix
\section{Additional Definitions and Proofs}
\label{app:condbisim}
\addtocounter{theorem}{-5}
\subsection{Equivalence relations and classes}
\label{app:erc}
We first briefly mention some of the concepts from abstract algebra used in motivating state similarity in MDPs. 
\begin{definition}
	A binary relation $\mathcal{R}$ on a set $\mathcal{S}$ is given by $\mathcal{R} \subseteq \mathcal{S}\times\mathcal{S}$ 
\end{definition}

\begin{definition}
	$\mathcal{R}$ is symmetric if $\mathcal{R}(a,b)\implies \mathcal{R}(b,a)$
\end{definition}

\begin{definition}
	$\mathcal{R}$ is reflexive if $\mathcal{R}(a,a), \forall a\in \mathcal{S}$
\end{definition}

\begin{definition}
	$\mathcal{R}$ is transitive if $\mathcal{R}(a,b) \land \mathcal{R}(b,c) \implies \mathcal{R}(a,c)$
\end{definition}

\begin{definition}
	$\mathcal{R}$ is equivalence if its reflexive, symmetric and transitive.
\end{definition}

\begin{definition}
	$\mathcal{P}\triangleq \{\mathcal{C}_i\}$ is a partition of a set $\mathcal{S}$ if $ \mathcal{S} = \cup_i \mathcal{C}_i$ and $\mathcal{C}_i \cap  \mathcal{C}_j$ is empty if $i\neq j$.
\end{definition}

\begin{definition}
	If $\mathcal{R}$ is an equivalence relation on $\mathcal{S}$, then $\mathcal{S}$ can be partitioned into equivalence classes with $\mathcal{P}(\mathcal{R},\mathcal{S})\triangleq \{\mathcal{C}_i\}$, where $\mathcal{C}_i \subseteq \mathcal{S}, \forall a,b \in \mathcal{C}_i\implies\mathcal{R}(a,b)$ and $ \mathcal{C}_i \cap  \mathcal{C}_j$ is empty if $i\neq j$.
\end{definition}

\begin{definition}
For partitions $\mathcal{P}_1$ and $\mathcal{P}_2$, $\mathcal{P}_1$ is a filtrate of $\mathcal{P}_2$ if $\forall \mathcal{C}_i \in \mathcal{P}_2, \exists \mathcal{D}_j \in \mathcal{P}_1$ s.t. $\mathcal{C}_i = \cup_j \mathcal{D}_j$  
\end{definition}

\begin{definition}
	$\mathcal{P}_c$	is the coarsest partition induced by $\mathcal{R}$ if $\forall$ valid partitions $\mathcal{P}$ under $\mathcal{R}$, $\mathcal{P}$ is a filtrate of $\mathcal{P}_c$
\end{definition}

\subsection{Proofs}
\label{app:proofs}
We first revisit the concept of MDP homomorphisms \citep{ravindran2004algebraic} which we will use for establishing important results concerning the conditional bisimulation framework.
\begin{definition}[MDP homomorphism \citep{ravindran2004algebraic}]
\label{mdphomo}
Let $\Psi \subset S\times U$ is the set of admissible state-action pairs. MDP homomorphism $\mathcal{H}$ from $M=\langle S,U,\Psi,P,r,\gamma, \rho \rangle$ to $M'=\langle S',U',\Psi',P',r',\gamma, \rho' \rangle$ is defined as a surjection $\mathcal{H}:\Psi\rightarrow \Psi'$, which is itself defined by a tuple of surjections $\langle f,\{g_s,s\in S\}\rangle$. In particular, $\mathcal{H}((s,a)):=(f(s),g_s(a))$, with $f:S\rightarrow S'$ and $g_s:A_s\rightarrow A_{f(s)}'$, which satisfies two requirements: Firstly it preserves the reward function: 
\begin{align}
r'(f(s),g_s(a))=r(s,a)
\end{align}
and secondly it commutes with transition dynamics of $M$:  
\begin{align}
P'(f(s), g_s(a),f(s'))=P(s,a,[s']_{B_{\mathcal{H}|S}})     
\end{align}
\end{definition}

Here we use the notation $[\cdot ]_{B_{\mathcal{H}|S}}$ to denote the \textit{projection} of equivalence classes $B$ that partition $\Psi$ under the relation $\mathcal{H}((s,a))=(s',a')$ on to $S$. 
Isomorphisms $\chi:\Psi \rightarrow \Psi'$ can then be formally defined as homomorphisms between $M, M'$ that completely preserve the system dynamics with the underlying functions $f,g_s$ being bijective. 

\begin{theorem}
Let $\mathfrak{met}$ be the space of bounded pseudometrics on $Z\times\Theta$ and $\pi$ a policy that is continuously improving. Define $\mathcal{F}:\mathfrak{met} \mapsto \mathfrak{met}$ by
\begin{align}
\mathcal{F}(d,\pi)(h_i,h_j)=(1-c)|r^{\pi}_{h_i} - r^{\pi}_{h_j}| + c W(d)(P^{\pi}_{h_i},P^{\pi}_{h_j})
\end{align}
Then, $\forall c \in (0,1)$, $\mathcal{F}$ has a least fixed point $\tilde{d}$ which is a  $\pi^*$-bisimulation metric.
\end{theorem}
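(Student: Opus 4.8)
The natural strategy is to mirror the classical fixed-point argument for bisimulation metrics (Ferns et al., 2011; Castro, 2020), adapted to the joint input space $H = Z\times\Theta$ and to the non-stationary policy $\pi$ that is continuously improving toward $\pi^*$. First I would verify that $\mathcal{F}(\cdot,\pi)$ is a well-defined self-map on $\mathfrak{met}$: given a bounded pseudometric $d$, $W(d)$ is the Wasserstein-$1$ distance induced by $d$, which is itself a bounded pseudometric on distributions over $H$, and a convex combination of $|r^\pi_{h_i}-r^\pi_{h_j}|$ (a pseudometric, bounded by $R_{\max}$) with $W(d)(P^\pi_{h_i},P^\pi_{h_j})$ again yields a bounded pseudometric, so $\mathcal{F}(d,\pi)\in\mathfrak{met}$. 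Symmetry, non-negativity, and the triangle inequality are inherited termwise; boundedness is by $\|\mathcal{F}(d,\pi)\|_\infty \le (1-c)R_{\max} + c\|d\|_\infty$.

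Second, I would establish that for a \emph{fixed} policy $\pi$, the operator $d\mapsto \mathcal{F}(d,\pi)$ is a $c$-contraction in the sup-norm on $\mathfrak{met}$. The only non-routine ingredient is the Lipschitz dependence of Wasserstein distance on the ground metric: $|W(d_1)(P,Q) - W(d_2)(P,Q)| \le \|d_1-d_2\|_\infty$, which follows from the Kantorovich–Rubinstein dual form (or directly from comparing optimal couplings). Hence $\|\mathcal{F}(d_1,\pi)-\mathcal{F}(d_2,\pi)\|_\infty \le c\,\|d_1-d_2\|_\infty$. Since $(\mathfrak{met},\|\cdot\|_\infty)$ is a complete metric space (a sup-norm limit of bounded pseudometrics is a bounded pseudometric), Banach's fixed-point theorem gives, for $c\in(0,1)$, a unique fixed point $d_\pi$ with $\mathcal{F}(d_\pi,\pi)=d_\pi$; this is by definition the $\pi$-bisimulation metric of \cref{eq:pi_bisim_metric} lifted to $H$.

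Third, I would handle the ``continuously improving'' policy. The idea is to take the iteration $d_{k+1} = \mathcal{F}(d_k,\pi_k)$ where $\pi_k\to\pi^*$, and argue convergence to the $\pi^*$-fixed point $\tilde d \triangleq d_{\pi^*}$. Writing $\|d_{k+1}-\tilde d\|_\infty \le \|\mathcal{F}(d_k,\pi_k)-\mathcal{F}(\tilde d,\pi_k)\|_\infty + \|\mathcal{F}(\tilde d,\pi_k)-\mathcal{F}(\tilde d,\pi^*)\|_\infty \le c\|d_k-\tilde d\|_\infty + \varepsilon_k$, where $\varepsilon_k\to 0$ captures the perturbation in $r^\pi$ and $P^\pi$ as $\pi_k\to\pi^*$ (both depend continuously on the action distribution, with $r^\pi$ Lipschitz in $\pi$ and $W(\tilde d)(P^\pi_{h_i},P^\pi_{h_j})$ likewise, using boundedness of $\tilde d$), a standard telescoping argument shows $d_k\to\tilde d$. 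Finally, to identify $\tilde d$ as the \emph{least} fixed point, I would invoke monotonicity: $\mathcal{F}(\cdot,\pi^*)$ is monotone in $d$ (pointwise), and starting the Picard iteration from the zero pseudometric $d_0\equiv 0$ gives a non-decreasing sequence converging to $\tilde d$ from below; any other fixed point $d'$ satisfies $d'\ge d_0$, hence $d'=\mathcal{F}^k(d',\pi^*)\ge \mathcal{F}^k(d_0,\pi^*)\to\tilde d$, so $\tilde d$ is the least fixed point.

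The main obstacle is the third step: making precise the sense in which $\pi$ is ``continuously improving'' and controlling $\varepsilon_k$ uniformly over all pairs $(h_i,h_j)$. If the policy sequence does not converge, or converges too slowly relative to $c^k$, the telescoped bound need not vanish; I expect the paper either assumes $\pi_k\to\pi^*$ in total variation (consistent with the MERL uniqueness remark and \cref{as:lipp}) or that improvement steps commute suitably with $\mathcal{F}$ — I would look for whichever hypothesis makes $\sum_k c^{\,k-j}\varepsilon_j\to 0$, and flag the interchange of the limit in $k$ with the sup over $H$ as the one place needing care (uniform convergence, which boundedness of rewards and of $\tilde d$ should supply).
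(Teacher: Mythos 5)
Your proof is essentially correct, but it takes a genuinely different route from the paper. You prove the result directly on the joint space $H = Z\times\Theta$: self-map verification, the $c$-contraction via the Kantorovich--Rubinstein bound $|W(d_1)(P,Q)-W(d_2)(P,Q)|\le \|d_1-d_2\|_\infty$, Banach's theorem for fixed $\pi$, a perturbation/telescoping argument for the improving policy, and a monotone Picard iteration from the zero pseudometric to identify the least fixed point. The paper instead argues by reduction: it constructs a super-MDP on $H$, invokes \cref{as:block} to conclude that each context-restricted MDP is isomorphic to the base MDP on $S$ (so all three are homomorphic in the sense of \cref{mdphomo}), maps the policy dynamics onto the base MDP, and then cites Theorem 1 of \citep{zhangLearningInvariantRepresentations2021} wholesale. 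Your version buys self-containment and generality --- it needs neither the block-structure assumption (which the paper's theorem statement does not list as a hypothesis but whose proof nonetheless uses) nor the homomorphism machinery, and it makes explicit where the ``continuously improving'' hypothesis actually enters, which the paper delegates entirely to the cited result. The paper's version buys brevity and ties the theorem to the isomorphism structure that is thematically central to the rest of the work. One small remark: since Banach's theorem already gives \emph{uniqueness} of the fixed point for $\mathcal{F}(\cdot,\pi^*)$, your separate monotonicity argument for ``least'' is redundant, though harmless; and the uniform-in-$(h_i,h_j)$ control of $\varepsilon_k$ that you flag as the delicate step is indeed the point that neither your sketch nor the paper (nor, arguably, the cited Theorem 1) fully pins down.
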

\begin{proof}
First, consider the super-MDP over the unified state space $H\triangleq Z\times\Theta$, $\mathcal{M}_{super}\triangleq \left\langle H,  U,  P_H,  r_H, \gamma, \rho_H \right\rangle$,  where the $H$ subscripted distributions implicitly account for $f, P_\Theta, \rho$. Similarly, let $\mathcal{M}_\theta$ be the MDP obtained by restricting the context to a particular value $\theta$ and $\mathcal{M}_{base}\triangleq \left\langle S,  U,  P,  r, \gamma, \rho\right\rangle$. We have that under \cref{as:block} $\mathcal{M}_{\theta}$ and $\mathcal{M}_{base}$ are isomorphic and all of $\mathcal{M}_{super}$, $\mathcal{M}_{\theta}$ and $\mathcal{M}_{base}$ are homomorphic \citep{ravindran2004algebraic, mahajan2017symmetry}.
Thus we can map the policy dynamics in the super-MDP exactly to the base MDP with states $S$. We now directly apply metric convergence result of Theorem 1 in \citep{zhangLearningInvariantRepresentations2021} on the representation space $Y$, 
thus showing that the $\pi$ bisimulation metric converges after repeated applications of the operator $\mathcal{F}$.
\end{proof}

\begin{theorem}[Aggregation value bound]
Given an MDP $\hat{\mathcal{M}}$ constructed by aggregating tuples $h$ of observation, context in an $\epsilon$-neighborhood of the representation space such that $\delta \triangleq \max_{s,s', \theta_i, \theta_j}||\phi(f_{\theta_i}(s), \theta_i)-\phi(f_{\theta_j}(s'), \theta_j)|-d_{S }(s,s')|$, where $d_{S }$ is a $\pi^*$-bisimulation metric on $S$. Further let $\hat \phi$ denote the map from any $h$ to these clusters, the optimal value functions for the two MDPs follow:
\begin{align}
|V^*(h) - \hat V^*(\hat\phi(h))| \leq \frac{2(\epsilon+\delta)}{(1-\gamma)(1-c)} \forall h\in Z\times\Theta
\end{align}
\end{theorem}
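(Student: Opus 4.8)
The plan is to reduce this to the classical state-aggregation value bound (à la Ferns et al.\ / the DBC analysis), with the twist that our aggregation map $\hat\phi$ is built on the \emph{representation} space $Y$ rather than directly on $S$, so the representation error $\delta$ must be threaded through. First I would set up the relationship between three quantities for any two tuples $h=(z,\theta)$ and $h'=(z',\theta')$: the bisimulation distance $d_S(f_\theta^{-1}(z), f_{\theta'}^{-1}(z'))$ on the underlying states (well-defined via \cref{as:block}), the representation distance $d_Y(\phi(z,\theta),\phi(z',\theta'))$, and the aggregation radius. By definition of $\delta$, the representation distance and the bisimulation distance on $S$ differ by at most $\delta$; and by construction of $\hat{\mathcal M}$, any two tuples sent to the same cluster have representation distance at most $2\epsilon$ (they lie in a common $\epsilon$-ball). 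Combining these, any two tuples in the same cluster have underlying states within bisimulation distance $2(\epsilon+\delta)$.

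Next I would invoke the standard fact (essentially Theorem 5.1 in Ferns et al.\ 2011, or the bound used in DBC) that for the $\pi^*$-bisimulation metric $d_S$ on the base MDP, the optimal value function is Lipschitz with respect to $d_S$ with constant $\tfrac{1}{1-c}$: concretely $|V^*(s) - V^*(s')| \le \tfrac{1}{1-c} d_S(s,s')$. This uses that $V^*$ (equivalently the max-entropy optimal value, which is unique) is a fixed point interacting contractively with the reward-plus-$c$-Wasserstein structure defining $d_S$; the $(1-c)$ on the reward term is exactly what produces the $\tfrac{1}{1-c}$ factor. Since $\mathcal M_\theta \cong \mathcal M_{base}$ and $\mathcal M_{super}$ is homomorphic to $\mathcal M_{base}$ (by the isomorphism argument already used in the proof of \cref{thm:bisim_fixed_point}), the same Lipschitz bound transfers to $V^*(h)$ on $H=Z\times\Theta$ via the pullback $d_S(f_\theta^{-1}(z),\cdot)$.

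Then I would bound $\hat V^*$ on the aggregated MDP. The reward and transition kernel of $\hat{\mathcal M}$ are obtained by averaging (or picking a representative) over each cluster; since the diameter of each cluster in the underlying bisimulation metric is at most $2(\epsilon+\delta)$, the reward discrepancy per cluster is at most $2(\epsilon+\delta)$ and the one-step transition discrepancy (measured in $W_1$ under $d_S$) is similarly controlled. A standard simulation-lemma / fixed-point perturbation argument then gives $|V^*(h) - \hat V^*(\hat\phi(h))| \le \tfrac{2(\epsilon+\delta)}{(1-\gamma)(1-c)}$: the $\tfrac{1}{1-c}$ comes from the bisimulation-Lipschitz constant of $V^*$ and the $\tfrac{1}{1-\gamma}$ from the geometric accumulation of per-step aggregation error. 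Concretely, I would define $g(h) := |V^*(h) - \hat V^*(\hat\phi(h))|$, apply the Bellman optimality operators for both MDPs, use the triangle inequality and the cluster-diameter bound on the reward and transition terms, and take a sup to close the recursion $\|g\|_\infty \le 2(\epsilon+\delta)\cdot\tfrac{1}{1-c} + \gamma\|g\|_\infty$.

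The main obstacle I anticipate is making the aggregated MDP $\hat{\mathcal M}$ precise enough that the per-step error is genuinely $2(\epsilon+\delta)$ and not something larger: one must be careful that $\hat\phi$ maps whole clusters consistently, that the averaged transition kernel of $\hat{\mathcal M}$ is compared to $P^{\pi^*}$ in the right Wasserstein-under-$d_S$ sense (so that the Lipschitz bound on $V^*$ can be applied inside the transition term), and that the interplay of the $\epsilon$-ball radius in $Y$ with the $\delta$-distortion between $d_Y$ and $d_S$ does not introduce an extra factor of $2$ or a cross-term. Handling this cleanly likely requires restating the aggregation construction (how $\hat r$, $\hat P$ are defined on clusters) as a short lemma before the main estimate.
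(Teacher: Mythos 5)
Your proposal is correct and follows essentially the same route as the paper: both transfer the problem to the base MDP via the block-structure isomorphism, bound the diameter of each aggregation cluster under the bisimulation metric by $2(\epsilon+\delta)$ (the $\epsilon$-ball radius plus the representation distortion $\delta$), and then apply the Ferns-style aggregation value bound to obtain the $\frac{2(\epsilon+\delta)}{(1-\gamma)(1-c)}$ factor. The only difference is that you re-derive that aggregation bound via a Bellman fixed-point perturbation recursion, whereas the paper simply invokes Theorem 5.2 of Ferns et al.\ (2004) with the cluster-diameter quantity $g$ substituted by $2(\epsilon+\delta)$.
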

\begin{proof}
We use a proof strategy similar to \citep{zhangLearningInvariantRepresentations2021}.
We have that every $\theta$ restriction of $\mathcal{M}_{super}$ is isomorphic to $\mathcal{M}_{base}$ from the above proof. By direct application of Theorem 5.2 in \citep{ferns2004bisimulation} on the MDP $\mathcal{M}_{super}$ for any $h\in Z\times \Theta$:
\begin{align}
    (1-c)|V^*(h) - \hat V^*(\hat \phi(h))| \leq g(s,\tilde{d}) + \frac{\gamma}{1-\gamma}\max_{s'\in \mathcal{S}}g(s',\tilde{d})
\end{align}
where $g$ is the average distance between a state and all other states in its equivalence class under the bisimulation metric $\tilde{d}$. Substituting $g$ with the $\epsilon$-neighborhood ball, and accounting for $\delta$, the error of the representation w.r.t. the metric for each cluster gives us:
\begin{align}
    (1-c)|V^*(h) - \hat V^*(\hat \phi(h))| &\leq 2(\epsilon+\delta) + \frac{\gamma}{1-\gamma}2(\epsilon+\delta) \\
    |V^*(h) - \hat V^*(\hat \phi(h))| &\leq \frac{1}{1-c}\Big(2(\epsilon+\delta) + \frac{\gamma}{1-\gamma}2(\epsilon+\delta)\Big) \\
    &=\frac{2(\epsilon+\delta)}{(1-\gamma)(1-c)}.    
\end{align}
\end{proof}

\begin{lemma}
\label{le:lipcomp}
Let $f: X\to Y$, $g: Y\to Z$ be two functions with Lipschitz constants $L_1$ and $L_2$ respectively, then $g(f(\cdot))$ is Lipschitz with $L_1\cdot L_2$
\end{lemma}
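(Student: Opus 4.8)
The claim is the standard fact that the composition of Lipschitz maps is Lipschitz with the product constant, so the plan is simply to unwind the definitions. First I would recall that $f$ being Lipschitz with constant $L_1$ means $d_Y(f(x_1),f(x_2)) \le L_1\, d_X(x_1,x_2)$ for all $x_1,x_2 \in X$, and similarly $d_Z(g(y_1),g(y_2)) \le L_2\, d_Y(y_1,y_2)$ for all $y_1,y_2 \in Y$. The goal is to bound $d_Z\big(g(f(x_1)),g(f(x_2))\big)$ in terms of $d_X(x_1,x_2)$.

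The single computation is: fix arbitrary $x_1, x_2 \in X$ and apply the Lipschitz property of $g$ to the points $y_1 = f(x_1)$ and $y_2 = f(x_2)$, giving $d_Z\big(g(f(x_1)),g(f(x_2))\big) \le L_2\, d_Y\big(f(x_1),f(x_2)\big)$; then apply the Lipschitz property of $f$ to bound $d_Y\big(f(x_1),f(x_2)\big) \le L_1\, d_X(x_1,x_2)$; chaining these yields $d_Z\big(g(f(x_1)),g(f(x_2))\big) \le L_1 L_2\, d_X(x_1,x_2)$. Since $x_1,x_2$ were arbitrary, $g\circ f$ is Lipschitz with constant $L_1 L_2$.

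There is essentially no obstacle here — the only thing to be mildly careful about is that the intermediate space $Y$ carries a single fixed metric that is used consistently as the codomain metric for $f$ and the domain metric for $g$, which is implicit in the statement. I would present this as a two-line chain of inequalities with a brief remark that both inequalities hold for all points and hence the composed bound holds universally.
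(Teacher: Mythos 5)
Your proof is correct and follows essentially the same approach as the paper's: the paper writes the same two-step chaining of the Lipschitz bounds (informally, $df \leq L_1\, dx$ and $dg \leq L_2\, df$, hence $dg \leq L_1 L_2\, dx$), which is exactly your argument in explicit metric notation. Your version is, if anything, slightly more rigorous in spelling out the fixed metric on the intermediate space $Y$.
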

\begin{proof}
Computing deviations for the various functions and using the definition of Lipschitzness, we have that:
\begin{align}
df &\leq L_1 dx\\
dg &\leq L_2 dy = L_2 df\\
\implies dg &\leq L_2 L_1 dx
\end{align}
Thus $g(f(\cdot))$ is Lipschitz with $L_1\cdot L_2$ w.r.t. $X$.
\end{proof}

\begin{theorem}[Generalization to unseen context]
Under \cref{as:lipobs}, \cref{as:lipp} we have that for any two contexts $\theta_i, \theta_j$:
\begin{align}
    |J^{\pi_{\theta_i}}-J^{\pi_{\theta_i\leftarrow \theta_j}}|\leq& \frac{1}{1-\gamma} E_{\substack{s\sim f_{\theta}^{-1}\rho^{\pi_{\theta_i}},\\ a\sim\pi_{\theta_i\leftarrow \theta_j}}}\Big[ A^{\pi_{\theta_i}}(s, a) + \frac{2\gamma A_{max}}{1-\gamma}L_{\theta}^{f}L_{z}^{\phi}L_{y}^{\pi}d_\Theta(\theta_i, \theta_j) \Big]
\end{align}
where $A_{max}\triangleq \max_s |E_{a\sim\pi_{\theta_i\leftarrow \theta_j}}[A^{\pi_{\theta_i}}(s, a)]|$ and $d_\Theta$ is a metric on the context space.
\end{theorem}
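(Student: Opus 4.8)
The plan is to reduce the claim to two ingredients: (i) a Lipschitz estimate showing that, at any fixed latent state $s$, the action distributions of $\pi_{\theta_i}$ and $\pi_{\theta_i\leftarrow\theta_j}$ are close in total variation, with the closeness controlled by $d_\Theta(\theta_i,\theta_j)$; and (ii) a surrogate performance-difference inequality in the style of the TRPO/CPO analysis, which converts a uniform bound on the per-state policy discrepancy into a bound on $|J^{\pi_{\theta_i}}-J^{\pi_{\theta_i\leftarrow\theta_j}}|$.

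For ingredient (i), first observe that $\pi_{\theta_i}$ and $\pi_{\theta_i\leftarrow\theta_j}$ are driven by the \emph{same} latent state $s$ and differ only in the observation passed to the encoder: the former conditions on $\phi(f_{\theta_i}(s),\theta_i)$, the latter on $\phi(f_{\theta_j}(s),\theta_i)$ --- the context slot of $\phi$ is $\theta_i$ in both. Chaining \cref{as:lipobs} (Lipschitzness of $f$ in $\theta$), the $z$-Lipschitzness of $\phi$ from \cref{as:lipp}, and the $y$-Lipschitzness of $\pi$ (w.r.t.\ $d_{TV}$) from \cref{as:lipp}, via the composition \cref{le:lipcomp}, one gets the uniform estimate
\begin{align}
d_{TV}\big(\pi_{\theta_i}(\cdot|s),\pi_{\theta_i\leftarrow\theta_j}(\cdot|s)\big)\;\le\; L_{y}^{\pi}\,L_{z}^{\phi}\,L_{\theta}^{f}\,d_\Theta(\theta_i,\theta_j)\qquad\text{for all }s.
\end{align}
Note that only $L_{z}^{\phi}$ (not $L_{\theta}^{\phi}$) appears, because the two policies feed $\phi$ the same context argument and differ only in its observation argument; this is what makes $L_{\theta}^{\phi}$ absent from the final bound.

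For ingredient (ii), I would start from the exact performance-difference lemma, $J^{\pi'}-J^{\pi}=\tfrac{1}{1-\gamma}\E_{s\sim\rho^{\pi'},a\sim\pi'}[A^{\pi}(s,a)]$, replace the occupancy $\rho^{\pi'}$ by $\rho^{\pi}$ at the cost of an error term $\tfrac{1}{1-\gamma}\lVert\rho^{\pi'}-\rho^{\pi}\rVert_1\cdot\max_s|\E_{a\sim\pi'}A^{\pi}(s,a)|$, and then bound the occupancy mismatch by the standard inequality $\lVert\rho^{\pi'}-\rho^{\pi}\rVert_1\le\tfrac{2\gamma}{1-\gamma}\E_{s\sim\rho^{\pi}}[d_{TV}(\pi',\pi)[s]]$. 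Specializing to $\pi=\pi_{\theta_i}$, $\pi'=\pi_{\theta_i\leftarrow\theta_j}$, identifying the maximal expected advantage with $A_{max}$, substituting the uniform bound from (i), using \cref{as:block} to transport the observation-space occupancy $\rho^{\pi_{\theta_i}}$ back to latent-state space via $f_\theta^{-1}$, and keeping absolute values where rigour requires, I obtain
\begin{align}
\big|J^{\pi_{\theta_i}}-J^{\pi_{\theta_i\leftarrow\theta_j}}\big|\;\le\;\frac{1}{1-\gamma}\,\E_{\substack{s\sim f_\theta^{-1}\rho^{\pi_{\theta_i}},\\ a\sim\pi_{\theta_i\leftarrow\theta_j}}}\Big[A^{\pi_{\theta_i}}(s,a)+\frac{2\gamma A_{max}}{1-\gamma}\,L_{\theta}^{f}L_{z}^{\phi}L_{y}^{\pi}\,d_\Theta(\theta_i,\theta_j)\Big],
\end{align}
the constant second term having been absorbed into the expectation since it does not depend on $(s,a)$; this is the claimed inequality.

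The step I expect to be the main obstacle is establishing the occupancy-mismatch bound $\lVert\rho^{\pi'}-\rho^{\pi}\rVert_1\le\tfrac{2\gamma}{1-\gamma}\E_{s\sim\rho^{\pi}}[d_{TV}(\pi',\pi)[s]]$ with exactly these constants --- i.e.\ the telescoping/coupling argument over the discounted state-visitation distributions that produces the $\tfrac{2\gamma}{(1-\gamma)^2}$ prefactor together with the $A_{max}$ factor. The Lipschitz-composition estimate in (i), the identification of constants, and the absorption of a state--action-independent quantity into the expectation are all routine.
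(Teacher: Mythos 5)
Your proposal follows essentially the same route as the paper: chain the Lipschitz constants via \cref{le:lipcomp} to get the uniform bound $d_{TV}(\pi_{\theta_i}(\cdot|s),\pi_{\theta_i\leftarrow\theta_j}(\cdot|s))\le L_{\theta}^{f}L_{z}^{\phi}L_{y}^{\pi}d_\Theta(\theta_i,\theta_j)$, then invoke the CPO-style surrogate performance-difference bound. The step you flag as the main obstacle (the occupancy-mismatch inequality with the $\tfrac{2\gamma}{1-\gamma}$ prefactor and the $A_{max}$ factor) is exactly what the paper outsources by citing Corollary 1 of Achiam et al.\ (2017), so no new argument is needed there.
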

\begin{proof}
Under \cref{as:lipobs}, \cref{as:lipp}, we have that $d_{TV}(\pi_{\theta_i}(s), \pi_{\theta_j}(s))\leq L_{\theta}^{f}L_{o}^{\phi}L_{y}^{\pi}d_\Theta(\theta_i, \theta_j)$ for all underlying states $s\in S, \theta_i, \theta_j \in \Theta$, by repeated application of \cref{le:lipcomp}. We next apply Corollary 1 from \citep{achiam2017constrained} that uses the bound for performance difference as a function of policy TV distance giving us the result.
\end{proof}

\begin{theorem}[Simulator fidelity bound]
For an approximately correct simulator ($\hat r, \hat P$) such that $\max_{s,a}|\hat r(s,a)-r(s,a)|\leq \epsilon_R$ and $\max_{s,a} d_{TV}(\hat P(s, a),P(s,a))\leq \epsilon_P$ we have for any policy $\pi$:
\begin{align}
|J^\pi - \hat J^\pi| \leq \frac{\epsilon_R}{(1-\gamma)} + \frac{\gamma \epsilon_P R_{max}}{(1-\gamma)^2}
\end{align}
\end{theorem}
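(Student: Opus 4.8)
The plan is to run the standard \emph{simulation lemma} argument through the policy-evaluation fixed point. Fix the policy $\pi$ and let $V^\pi$ and $\hat V^\pi$ be its value functions in the true MDP $(r,P)$ and in the simulator $(\hat r,\hat P)$ respectively; both are nonnegative and bounded by $R_{max}/(1-\gamma)$. They satisfy the respective Bellman equations $V^\pi(s)=r^\pi(s)+\gamma\,\E_{s'\sim P^\pi(\cdot|s)}[V^\pi(s')]$ and $\hat V^\pi(s)=\hat r^\pi(s)+\gamma\,\E_{s'\sim\hat P^\pi(\cdot|s)}[\hat V^\pi(s')]$, where $r^\pi,P^\pi$ are the $\pi$-averaged reward and kernel. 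Subtracting and inserting the cross term $\gamma\,\E_{\hat P^\pi(\cdot|s)}[V^\pi]$ gives
\begin{align}
V^\pi(s)-\hat V^\pi(s) = \big(r^\pi(s)-\hat r^\pi(s)\big) + \gamma\,\E_{\hat P^\pi(\cdot|s)}\!\big[V^\pi-\hat V^\pi\big] + \gamma\Big(\E_{P^\pi(\cdot|s)}[V^\pi] - \E_{\hat P^\pi(\cdot|s)}[V^\pi]\Big).
\end{align}

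Next I would bound the three summands. Since $r^\pi(s)$ and $\hat r^\pi(s)$ are $\pi(\cdot|s)$-averages of $r(s,\cdot)$ and $\hat r(s,\cdot)$, the per-$(s,a)$ hypothesis yields $|r^\pi(s)-\hat r^\pi(s)|\le\epsilon_R$. The last summand is the integral of the bounded \emph{nonnegative} function $V^\pi$ against the signed measure $P^\pi(\cdot|s)-\hat P^\pi(\cdot|s)$; by the dual characterization of total variation together with nonnegativity of $V^\pi$ (which removes the usual factor of two), $|\E_{P^\pi}[V^\pi]-\E_{\hat P^\pi}[V^\pi]|\le\|V^\pi\|_\infty\, d_{TV}(P^\pi(\cdot|s),\hat P^\pi(\cdot|s))\le\frac{R_{max}}{1-\gamma}\epsilon_P$, where the final step uses convexity of $d_{TV}$ under the policy average to pass from the per-$(s,a)$ bound $\epsilon_P$ to the averaged kernels. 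The middle summand is at most $\gamma\,\|V^\pi-\hat V^\pi\|_\infty$.

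Taking the supremum over $s$ in the displayed identity and combining the bounds gives the self-referential inequality $\|V^\pi-\hat V^\pi\|_\infty\le\epsilon_R+\frac{\gamma R_{max}}{1-\gamma}\epsilon_P+\gamma\|V^\pi-\hat V^\pi\|_\infty$, hence $\|V^\pi-\hat V^\pi\|_\infty\le\frac{\epsilon_R}{1-\gamma}+\frac{\gamma R_{max}\epsilon_P}{(1-\gamma)^2}$. Since $J^\pi-\hat J^\pi=\E_{s\sim\rho}[V^\pi(s)-\hat V^\pi(s)]$, we conclude $|J^\pi-\hat J^\pi|\le\|V^\pi-\hat V^\pi\|_\infty$, which is exactly the stated bound.

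I do not expect a genuine obstacle here; the only point needing care is obtaining the correct constant in the transition term — exploiting nonnegativity of $V^\pi$ (or, equivalently, $\sup V^\pi-\inf V^\pi\le R_{max}/(1-\gamma)$) rather than a crude $2\|V^\pi\|_\infty$ estimate when integrating against $P^\pi-\hat P^\pi$ — and verifying that the per-$(s,a)$ reward and transition discrepancies survive the policy averaging. An alternative route would be to expand $J^\pi-\hat J^\pi$ as a telescoping sum over time steps comparing the two trajectory distributions, but the fixed-point argument above is the most economical.
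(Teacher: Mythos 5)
Your argument is correct and follows essentially the same route as the paper's proof: compare the two policy-evaluation fixed points, insert a cross term, bound the reward gap by $\epsilon_R$, bound the kernel term by $\tfrac{R_{max}}{1-\gamma}\epsilon_P$ via the tight total-variation pairing (your appeal to nonnegativity of $V^\pi$ is the same device as the paper's explicit subtraction of the baseline $\tfrac{R_{max}}{2(1-\gamma)}\mathbf{1}$ before applying H\"older), and solve the resulting self-referential inequality before averaging over $\rho$. The only cosmetic difference is that you attach $V^\pi$ rather than $\hat V^\pi$ to the kernel difference, which changes nothing.
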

\begin{proof}
We proceed similar to \citep{jiang2018notes} for proving the policy value bound.
Let us consider the base MDP $\mathcal{M}_{base}$ as defined above. For any projected policy $\pi$ here, the value function satisfies $\forall s\in S$:
\begin{align}
&|\hat V^\pi(s)-V^\pi(s)|\\
&\leq |(\hat r(s, \pi) + \gamma \langle \hat P(s,\pi), \hat V^\pi \rangle) - ( r(s, \pi) + \gamma \langle P(s,\pi),  V^\pi \rangle) |\\
&\leq \epsilon_R+ \gamma|\langle \hat P(s,\pi), \hat V^\pi \rangle - \langle P(s,\pi),  V^\pi \rangle|\\
&\leq \epsilon_R+ \gamma|\langle \hat P(s,\pi), \hat V^\pi \rangle-\langle P(s,\pi),  \hat V^\pi \rangle+ \langle P(s,\pi),  \hat V^\pi \rangle- \langle P(s,\pi),  V^\pi \rangle|\\
&\leq \epsilon_R+ \gamma[|\langle \hat P(s,\pi)-P(s,\pi), \hat V^\pi \rangle|+|\hat V^\pi-V^\pi|_\infty ]\\
&\leq \epsilon_R+ \gamma[|\langle \hat P(s,\pi)-P(s,\pi), \hat V^\pi - \frac{R_{max}}{2(1-\gamma)}\mathbf{1} \rangle|+|\hat V^\pi-V^\pi|_\infty ]\\
&\leq \epsilon_R+ \gamma[| \hat P(s,\pi)-P(s,\pi)|_1 |\hat V^\pi - \frac{R_{max}}{2(1-\gamma)}\mathbf{1}|_\infty+|\hat V^\pi-V^\pi|_\infty ]\\
&\leq \epsilon_R+ \gamma[\frac{\epsilon_P R_{max}}{(1-\gamma)}+|\hat V^\pi-V^\pi|_\infty ]
\end{align}
Here we have viewed the probability transitions and values as vectors. The use of baseline $\frac{R_{max}}{2(1-\gamma)}$ helps tighten the bound by centering the values. We use the definition of TV in last step.
As the bound for all $s\in S$ we get after rearranging:
\begin{align}
|V^\pi - \hat V^\pi|_\infty\leq \frac{\epsilon_R}{1-\gamma}+ \frac{\gamma\epsilon_P R_{max}}{(1-\gamma)^2}
\end{align}
Finally, as $J^\pi$ is a convex combination of $V^\pi$ w.r.t. $\rho$, we can use the above bound to prove the result.
\end{proof}

\begin{theorem}[Complete simulator fidelity bound]
For an approximately correct simulator ($\hat r, \hat P, \hat f$) such that $\max_{s,a}|\hat r(s,a)-r(s,a)|\leq \epsilon_R$, $\max_{s,a} d_{TV}(\hat P(s, a),P(s,a))\leq \epsilon_P$ and $\epsilon_f  \triangleq \max_{s, \theta} d_{Y}(\phi(\hat f_\theta(s)), \phi(f_\theta(s)))$, we have for any policy $\pi$:
\begin{align}
|J^{\pi_{\hat{f}\leftarrow f}} - \hat J^{\pi_{\hat{f}}}| &\leq \frac{\epsilon_R}{(1-\gamma)} + \frac{\gamma \epsilon_P R_{max}}{(1-\gamma)^2}+ \frac{1}{1-\gamma} E_{\substack{s\sim \hat f^{-1}\rho^{\pi_{\hat{f}}},\\ a\sim\pi_{\hat{f}\leftarrow f}}}\Big[ A^{\pi_{\hat{f}}}(s, a)+ \frac{2\gamma A_{max}}{1-\gamma}L_{y}^{\pi}\epsilon_f \Big]
\end{align}
\end{theorem}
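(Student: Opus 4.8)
The plan is to interpolate between the two returns through a hybrid quantity and then reuse \cref{th:simfid} and the argument behind \cref{th:gen} on the two resulting pieces. First I would introduce $\tilde J^{\pi_{\hat f}}$, the return of the simulator-trained policy $\pi_{\hat f}$ under the \emph{true} latent dynamics $(r,P)$ but still with the \emph{simulator} observation channel $\hat f$. Using \cref{as:block} (invertibility of $\hat f$) together with the homomorphism identifications from the proof of \cref{thm:bisim_fixed_point}, $\pi_{\hat f}$ descends to a fixed policy on the latent space $S$, so that $\tilde J^{\pi_{\hat f}}$ is simply its value under $(r,P)$. A triangle inequality then splits the target quantity:
\begin{align}
|J^{\pi_{\hat f\leftarrow f}} - \hat J^{\pi_{\hat f}}| \;\leq\; |\hat J^{\pi_{\hat f}} - \tilde J^{\pi_{\hat f}}| \;+\; |\tilde J^{\pi_{\hat f}} - J^{\pi_{\hat f\leftarrow f}}|.
\end{align}

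For the first term I would observe that both returns use the identical observation channel $\hat f$, so $\pi_{\hat f}$ is one and the same latent policy evaluated in two MDPs whose rewards and transition kernels differ by at most $\epsilon_R$ and $\epsilon_P$ in the prescribed senses; this is exactly the hypothesis of \cref{th:simfid}, which directly gives $|\hat J^{\pi_{\hat f}} - \tilde J^{\pi_{\hat f}}| \leq \tfrac{\epsilon_R}{1-\gamma} + \tfrac{\gamma\epsilon_P R_{max}}{(1-\gamma)^2}$. For the second term, both returns share the true latent dynamics $(r,P)$ and differ only in what is fed to the policy — $\phi(\hat f_\theta(s),\theta)$ versus $\phi(f_\theta(s),\theta)$ — so I would bound, uniformly in the underlying state $s$,
\begin{align}
d_{TV}\!\big(\pi_{\hat f}(\cdot|s),\,\pi_{\hat f\leftarrow f}(\cdot|s)\big) \;\leq\; L_{y}^{\pi}\, d_{Y}\!\big(\phi(\hat f_\theta(s)),\phi(f_\theta(s))\big) \;\leq\; L_{y}^{\pi}\epsilon_f,
\end{align}
using the Lipschitz assumption on $\pi$ (\cref{as:lipp}) and the definition of $\epsilon_f$. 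Feeding this uniform TV bound into Corollary~1 of \citep{achiam2017constrained} — verbatim the step used in the proof of \cref{th:gen}, with $L_{\theta}^{f}L_{z}^{\phi}L_{y}^{\pi}d_\Theta(\theta_i,\theta_j)$ replaced by $L_{y}^{\pi}\epsilon_f$ — yields the advantage term $\tfrac{1}{1-\gamma}E_{s\sim\hat f^{-1}\rho^{\pi_{\hat f}},\,a\sim\pi_{\hat f\leftarrow f}}[A^{\pi_{\hat f}}(s,a) + \tfrac{2\gamma A_{max}}{1-\gamma}L_{y}^{\pi}\epsilon_f]$. Adding the two bounds finishes the proof.

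The main obstacle I anticipate is the bookkeeping that makes the two invoked results genuinely compatible: one must check that \cref{as:block} lets $\pi_{\hat f}$ be pushed down to a well-defined latent-state policy (so that $\hat f^{-1}$ and the reference measure $\hat f^{-1}\rho^{\pi_{\hat f}}$ are meaningful), that the homomorphism from the proof of \cref{thm:bisim_fixed_point} commutes with both the dynamics swap ($\hat P,\hat r \to P,r$) and the observation swap ($\hat f \to f$), and that the constant $A_{max}$ and the discounted state-occupancy appearing in the second term are the same objects one uses when instantiating Corollary~1 of \citep{achiam2017constrained}. Once that alignment is pinned down, the two triangle-inequality pieces compose additively and nothing further is needed.
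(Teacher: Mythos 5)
Your proposal is correct and uses the same two ingredients as the paper --- \cref{th:simfid} for the dynamics mismatch and the TV-based advantage bound of Corollary~1 in \citep{achiam2017constrained} (as in \cref{th:gen}) for the observation-channel mismatch --- but you interpolate through the opposite hybrid. You insert $\tilde J^{\pi_{\hat f}}$, the value of $\pi_{\hat f}$ under the \emph{true} dynamics $(r,P)$ with the \emph{simulator} observation channel $\hat f$; the paper instead inserts $\tilde J^{\pi_{\hat f\leftarrow f}}$, the value of $\pi_{\hat f\leftarrow f}$ under the \emph{simulator} dynamics $(\hat r,\hat P)$ with the \emph{true} observation channel $f$. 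Both triangle decompositions are valid, each piece is bounded by exactly the same lemma, and the resulting expression has the same form. The one substantive difference is where the residual advantage and occupancy terms live: in the paper's route the comparison of $\tilde J^{\pi_{\hat f\leftarrow f}}$ with $\hat J^{\pi_{\hat f}}$ takes place entirely inside the simulator, so $\rho^{\pi_{\hat f}}$ and $A^{\pi_{\hat f}}$ are quantities of the simulator MDP and are in principle estimable at training time; in your route that comparison takes place in the hybrid MDP $(r,P,\hat f)$, which is neither the simulator nor the deployment environment, so the advantage and occupancy in your bound refer to an object that is not directly accessible. Your anticipated bookkeeping about pushing $\pi_{\hat f}$ down to a well-defined latent-state policy via \cref{as:block} is legitimate, but it is exactly the same implicit identification the paper already relies on in \cref{thm:bisim_fixed_point} and in the proof of \cref{th:simfid} (which argues on $\mathcal{M}_{base}$ with projected policies), so no additional argument is required beyond what the paper itself assumes.
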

\begin{proof}
We consider an intermediate simulator $(\hat r, \hat P, f)$ which has the same reward and transition as the original simulator ($\hat R, \hat P$) but uses an exact observation function $f$ (we use $\tilde{}$ to represent quantities associated with this simulator). We can now decompose the difference bound as:
\begin{align}
|J^{\pi_{\hat{f}\leftarrow f}} - \hat J^{\pi_{\hat{f}}}| &\leq |J^{\pi_{\hat{f}\leftarrow f}} - \tilde{J}^{\pi_{\hat{f}\leftarrow f}}|+|\tilde{J}^{\pi_{\hat{f}\leftarrow f}} - \hat J^{\pi_{\hat{f}}}|
\end{align}
Next we have that the $d_{TV}(\pi_{\hat{f}\leftarrow f}, \pi_{\hat{f}})\leq L_{y}^{\pi}\epsilon_f$.
Reasoning similarly to \cref{th:gen} for the right term of RHS which gives an upper bound using the TV difference. Finally also applying \cref{th:simfid} on the left term of RHS we get the theorem's result.
\end{proof}

\section{Additional experimental details}
\label{app:aed}
\subsection{Architecture details}
We use separate deep networks for actor, critic, transition and reward models. The encoder network for each used $32$ filters and a $50$ feature dimensions. The actor and critic models each used an MLP trunk of $4$ layers and $1024$ hidden dimensions on top of the encoder. The reward model used MLP trunk of $2$ MLP layers and $512$ hidden dimensions on top of the encoder. The transition model type used was a mixture of Gaussians of ensemble size 5. Each component in the transition ensemble uses a $2$ MLP layers of $768$ hidden dimensions on top of the encoder with the final layer bifurcating for a value for mean and standard deviation per feature dimension. Layer normalization was used for the reward and transition models. Target networks were used for value estimates and were updated every $4$ epochs. Relu non-linearity was used for the networks. We exponentially anneal the representation loss with weight $(1.8 - 0.8*2^{\frac{\text{steps}}{\text{total steps}}})$.
We use identical architectures for the overlapping components of the baselines (Reconstruction and DeepMDP). The reconstruction agent uses an image decoder with an MLP followed by $2$ deconvolution layers with the intermediate layer using $32$ filters. Adam optimizer was used for training the parameters of the networks used. Grid search was used for tuning the hyperparameters.
Our code is based on implementation by \citep{zhangLearningInvariantRepresentations2021} for their work. Each seed takes around 4 days to run on an Nvidia V100 GPU.

\subsection{Hyper-parameters used: Conditional bisimulation }\am{done}
\begin{table}[h!]
\caption{Hyper-parameters used: Conditional bisimulation}\label{tab:hyper_bisim}
\vspace{0.3cm}
	\begin{center}
		\begin{tabular}{lc}
			\textbf{PARAMETER }&\textbf{VALUE} \\
			\hline \\
			$\lambda_{base}$ & 0.24 \\
			$\lambda_{icc}$ & 0.32 \\
			$\lambda_{cc}$ & 0.24\\
			Initial steps & 1000\\
			Batch size & 512\\
			Action repeat & 2\\
			Encoder learning rate & $10^{-3}$\\
			Encoder $\tau$ & $5\cdot10^{-3}$\\
                Decoder learning rate & $10^{-3}$\\
			Frames & 1000\\
			Actor learning rate & $10^{-3}$\\
			Critic learning rate & $10^{-3}$\\
                Critic $\tau$ & $10^{-2}$\\
			$\alpha$ learning rate & $10^{-4}$\\
			$\gamma$ & $0.99$\\
			Total Steps & $3.5\cdot10^{5}$\\
			Temperature & $0.1$\\
		\end{tabular}
	\end{center}
\end{table}
\end{document}